\newtheorem{thm}{Theorem}[section]
\newcommand{\comment}[2][]{{\color{red}
		\ifthenelse{\isempty{#1}}%
		{}
		{\textit{#1}:~}
		#2}}
\newcommand{\xmark}{\ding{55}}%
\journal{arXiv}
\begin{document}
	
	\begin{frontmatter}
		
		\title{Fast algorithm for centralized multi-agent maze exploration}
		
		\author[mathri]{Bojan Crnković}
		\ead{bojan.crnkovic@uniri.hr}
		\author[riteh]{Stefan Ivić}
		\ead{stefan.ivic@riteh.hr}
		\author[unimostar]{Mila Zovko\corref{cor1}}
		\ead{mila.zovko@fpmoz.sum.ba}
		
		\cortext[cor1]{Corresponding author}
		
		\affiliation[mathri]{organization={Faculty of Mathematics, University of Rijeka},
			addressline={Radmile Matejčić 2}, 
			city={Rijeka},
			postcode={51000}, 
			country={Croatia}}
		\affiliation[riteh]{organization={Faculty of Engineering, University of Rijeka},
			addressline={Vukovarska 58}, 
			city={Rijeka},
			postcode={51000}, 
			country={Croatia}}
		\affiliation[unimostar]{organization={Faculty of Science and Education, University of Mostar},
			addressline={Matice hrvatske bb}, 
			city={Mostar},
			postcode={88000}, 
			country={Bosnia and Herzegovina}}
		
		\begin{abstract}
Recent advances in robotics have paved the way for robots to replace humans in perilous situations, such as searching for victims in burning buildings, in earthquake-damaged structures, in uncharted caves, traversing minefields or patrolling crime-ridden streets. These challenges can be generalized as problems where agents have to explore unknown mazes. We propose a cooperative multi-agent system of automated mobile agents for exploring unknown mazes and localizing stationary targets. The Heat Equation-Driven Area Coverage (HEDAC) algorithm for maze exploration employs a potential field to guide the exploration of the maze and integrates cooperative behaviors of the agents such as collision avoidance, coverage coordination, and path planning. In contrast to previous applications for continuous static domains, we adapt the HEDAC method for mazes on expanding rectilinear grids. The proposed algorithm guarantees the exploration of the entire maze and can ensure the avoidance of collisions and deadlocks. Moreover, this is the first application of the HEDAC algorithm to domains that expand over time. To cope with the dynamically changing domain,   succesive over-relaxation (SOR) iterative linear solver has been adapted and implemented, which significantly reduced the computational complexity of the presented algorithm when compared to standard direct and iterative linear solvers. The results highlight significant improvements and show the applicability of the algorithm in different mazes. They confirm its robustness, adaptability, scalability and simplicity, which enables centralized parallel computation to control multiple agents/robots in the maze.
		\end{abstract}

		
		\begin{keyword}
			cooperative search \sep multi-agent system \sep maze solving \sep maze exploration \sep graph exploration \sep succesive over-relaxation method
			
			
		\end{keyword}
		
	\end{frontmatter}
	
	
	\section{Introduction}
	\label{sec:introduction}

	Maze exploration has captivated human curiosity over a long period of time. It has served as a valuable tool in scientific investigations, often to assess the cognitive abilities of animals, especially mice, and more recently to study the artificial intelligence of robots. Numerous papers have been published dealing with the exploration of a maze by a single agent or a multi-agent system. To illustrate the long-standing interest of scientists in this topic, it suffices to mention that the Trémaux algorithm dates back to the second half of the 19th century and one of the another first papers addressing maze exploration problem was \cite{shannon1993presentation}, which investigated the exploration of a two-dimensional, maze by one agent (a mechanical mouse) where the agent initially has no knowledge about maze structure.
	
	In this paper, we consider the problem of exploring an unknown maze by a cooperative multi-agent system that traverses the maze along continuous paths in discrete time. Each maze is defined by its nodes, which represent possible locations within the maze, and by walls that may separate neighboring nodes.
	To simulate a real-world application as in \cite{atilla2013design}, collisions, collaboration, coordination of coverage and path planning must be taken into account when controlling the agents.
	The common goal of maze exploration is to minimize the total time required to find the target or until all nodes are visited (i.e. maze mapping).
	The maze exploration problem can be divided into two different subtypes: those where the layout of the maze, including the walls and corridors, is already known, and those where it is unknown. Within these subtypes, there are additional variations depending on whether the target location (usually the exit) that agents are looking for in a maze is known, and whether or not these targets are stationary over time.
	If the layout of the maze is already known, agents usually rely on path planning algorithms to navigate through the maze and eventually reach their target, e.g. an exit.
	\cite{alamri2021autonomous} and \cite{sadik2010comprehensive} have mentioned some of the known algorithms for single agent maze exploration where the layout of the maze is known in advance, along with the start node and the target node, and of these we will briefly describe some. The dead-end fill algorithm works by tracing the walls of the maze and marking all dead ends or paths that do not lead to the exit. The flood-fill algorithm can also be modified to explore known mazes by assigning a value to each maze node based on its distance to the exit, as described by \cite{sadik2010comprehensive}.
	\cite{lee1961} proposed a path planning algorithm based on the "breadth first search" strategy, which is able to identify all possible paths between two maze nodes and select the shortest one. \cite{soukp1978} proposed an algorithm based on a combination of breadth first and depth first algorithms, which proved to be 10-50 times faster than Lee's algorithm. \cite{hadlock1977} introduced another faster variant of Lee's algorithm, which takes into account the value known as the detour number instead of the distance to the target as a cost measure.
	Several heuristic search algorithms are also used for this purpose, and one of the best known algorithms of this type is the A* algorithm  introduced by \cite{hart1968}. This algorithm combines depth-first search with a heuristic estimate of the cost of reaching the target node.
	The A* algorithm was significantly modified and adapted to various requirements. For example, \cite{warren1993} introduced a method for planning the path of an object (robot) through a space littered with obstacles by using a modified A* method to search through free space. \cite{ElHalawany2013} proposed a modified A* algorithm that considers the size of the robot as a parameter to generate a safer path for the robot and avoid sharp turns. In addition, \cite{le2018} presented a modified A* algorithm for efficient path planning for a Tetris-inspired self-configurable robot with an integrated laser sensor. One of the latest A* modifications is the fuzzy A* method presented in \cite{Airlangga_2023}, which uses the principles of fuzzy logic to adapt to different degrees of complexity of the environment.
	All the previously mentioned algorithms are primarily intended for single-agent path planning, and according to \cite{Tjiharjadi2022_1}, not many studies have used multi-agent path finding (MAPF) in mazes.
	One of the other approaches to the problem of path planning and maze exploration is based on the artificial potential field (APF) method, which was originally introduced in \cite{khatib1986real} with the aim of finding a collision-free path for the robot arm. The basic idea of generating an artificial potential field is to assign an attractive field to the target and a repulsive field to the obstacles in the robot environment. The combination of all repulsive and attractive fields results in the artificial potential field. The robot was considered as a particle that must move in this artificial potential field. The improvement of the APF method was continued (\cite{krogh1984improvement}), as well as its application (\cite{thorpe1984application}), which eventually led to the combined method for global and local path planning presented in \cite{krogh1986generalized}.
	Over time, these methods have also undergone some modifications that ensure near-optimal smooth paths and allow application to dynamic multi-agent target search. In \cite{vadakkepat2000evolutionary}, a new method called Evolutionary Artificial Potential Field (EAPF) was presented for real-time robot path planning. They combine the APF method with genetic algorithms to derive optimal potential field functions and ensure that the local minimum problem associated with EAPF is avoided. Simulation results have shown that the proposed methodology is robust and efficient for robot path planning when targets and obstacles are non-stationary. In \cite{yagnik2010motion}, a hybrid control methodology using APF and a modified simulated annealing optimization algorithm for motion planning of a team of multi-link snake robots is presented. A simulated annealing optimization algorithm is used for the robots to recover from local minima, while APF is used for simple and efficient path planning. This hybrid control method has proven successful in navigating the robot to its destination while avoiding collisions with other robots and obstacles.
	
	However, when exploring a known maze, the agent is already familiar with the structure of the maze and enters the maze with a predetermined route to the target. In this way, collisions, dead ends and similar problems that occur when exploring an unknown maze can be avoided. Real-world problems often cannot be reduced to the exploration of a known maze, as we cannot guarantee that the structure we perceive as a maze has not been altered beforehand. For these reasons, we tend to focus on algorithms for exploring unknown mazes.
	If the structure of the maze is unknown in advance, agents must first explore the immediate environment using sensors and then strategize their actions based on the information gathered. Navigating through an unknown maze in search of stationary goals, such as an exit, therefore involves problems related to detection, search and wayfinding and can be addressed using appropriate methods and techniques.
	\cite{alamri2021autonomous} and \cite{rao1993} list several basic algorithms for unknown maze exploration by a single agent, where the agent has no prior knowledge about maze structure but knows the starting position and the target location: Wall-Folower, Random Mouse, Tremaux algorithm, Tarry algorithm and Pledge algorithm.
	The Wall-Folower algorithm is the most commonly used algorithm for exploring mazes. The basic idea is that the agent follows the walls of the maze and applies the right or left rule whenever there is an intersection. This algorithm works well only on perfect mazes. 
	The random mouse algorithm is one of the simplest algorithms for solving mazes.  Due to its random movements, the algorithm can take a long time to find the target, especially in large or complex mazes. Tremaux's algorithm for exploring unknown mazes is one of the first systematic approaches to solving this problem. It is a simple approach to solving mazes based on path markers: The agent traverses the maze and marks each path it traverses; when it encounters an intersection, it chooses an unvisited path or one that has been marked fewer times. This process is repeated until the target in the maze is found. Similar to Tremaux's algorithm is Tarry's algorithm, which is based on the idea of constructing a cyclic directed path that traverses each node of the maze once and only once in each direction, and is described in more detail in \cite{rao1993}. Both Tremaux's algorithm and Tarry's algorithm are forms of depth-first search (DFS). The basic idea of the pledge algorithm is that the agent keeps the obstacle on its left or right side while exploring the maze and navigates by summing up the turning angles it encounters along the boundary of the obstacles \cite{abelson1981exploring}.
	\cite{elshamarka2012} presented a method based on the flood-fill algorithm for exploring an unknown maze by a single agent, where the initial position of the agent and the location of the target are known, but the agent initially has no information about the obstacles between them.
	\cite{oshada2023} proposed a new method for reducing the travel time through a maze whose wall structure is initially unknown to the micro-mouse robot, and it is based on the Flood-fill algorithm, Bézier curve interpolation, and path tracing techniques.
	One of the newer path-finding methods that could be used for maze navigation was presented in \cite{NAGHIZADEH2020113217}. This method is primarily intended for pathfinding of robots in unknown environments with unknown target positions and is based on bio-logical firing patterns of
	cells. This method proved to be fast and memory efficient for uninformed search, e.g. finding the target in a maze.
	Reinforcement learning (RL) is the process of learning based on a state-reward-action function. One of the best known RL algorithm for path planning is the Q algorithm, which is analyzed in detail in \cite{Jang2019}. This approach evolves significantly when integrated with neural networks, leading to the development of the Deep Q-Network (DQN) algorithm, which was used by \cite{nandy2023} for the purpose of solving the unknown maze by a single agent. The DQN algorithm uniquely assigns weights based on the results of previous trials, enabling a form of learning that is cumulative and adaptive. In contrast, algorithms such as A* treat all trials performed by the agent with equal importance and focus on predetermined heuristics to guide the search process. This fundamental difference emphasizes the adaptability of the DQN by learning from its environment and making adjusted decisions based on previous experience rather than relying solely on static rules or heuristics.
	Single-agent maze exploration algorithms are often used to solve mazes, but their extension to multi-agent systems is not optimal and leaves room for improvement.
	
	Some of the techniques mentioned were previously used for exploration by a single agent, but their modification allowed them to be used for exploration of the unknown maze by multiple agents. One of these modifications, specifically for the Tarry algorithm, is presented in \cite{kivelevitch2010multi}. The Kivelevitch-Cohen algorithm solves the problem of finding the exit in unknown perfect mazes, where a perfect maze.
	It was further modified and improved in \cite{alian2022multi} for general types of mazes and the implementation of collision avoidance of agents. These two algorithms are explained in more detail in Subsection \ref{sec:comparison_alternative}.
	In \cite{Tjiharjadi2022_2} it is demonstrated how the improved Flood-fill algorithm can be applied to find a target with two agents in a small unknown maze. The effectiveness of the algorithm when using two agents strongly depends on the initial position and orientation of the agents, since the agents can follow the same path, which can reduce the efficiency of the search.
	Another approach can be found in \cite{YOUSSEFI2021114907}, where a decentralized and asynchronous robot search algorithm based on particle swarm optimization is presented, which is used to solve mazes and find targets in unknown environments. This algorithm proved to be very effective in solving mazes of varying complexity.
	In \cite{husain2022}, it is presented how a bio-inspired technique, specifically ant algorithms (AA), can optimize the search time for a trapped victim in an unknown maze, and then Dijkstra's algorithm is used for the rescue phase. Four different ant algorithms were implemented to analyze and demonstrate the different effects of pheromones.
	\begin{table*}[h]
		\caption{The summary of features of maze exploration algorithms}
		\centering\footnotesize
		\begin{tabular}{cccccccc}
			\hline
			Reference & Year & Algorithm &Based & Known & Known & Additional & Multi \\
			&  & name &on the &  maze& target& requirements & agent \\
			&  & & algorithm& structure& location& &  \\
			\hline
			\cite{alamri2021autonomous}& &Random mouse& -& \xmark&\checkmark&-&\xmark \\
			\hline
			\cite{alamri2021autonomous}& 1881& Tremaux's  &DFS& \xmark&\checkmark&-&\xmark \\
			\hline
			\cite{alamri2021autonomous}& 1895 & Tarry's  &DFS& \xmark&\checkmark&-&\xmark \\
			\hline
			\cite{shannon1993presentation}& 1951 &Maze-solving machine &- &\xmark&\xmark &- &\xmark \\
			\hline   
			\cite{abelson1981exploring} &20th c.& Pledge &- & \xmark&\checkmark&-&\xmark \\
			\hline  
			\cite{alamri2021autonomous}& 20th c.&Dead-end filling  & - &\checkmark&\checkmark &- &\xmark  \\
			\hline
			\cite{alamri2021autonomous}& 20th c.&  Flood-fill  &- &\checkmark&\checkmark & -&\xmark  \\
			\hline
			\cite{lee1961}& 1961& Lee's& BFS&\checkmark&\checkmark &- &\xmark  \\
			\hline 
			\cite{hart1968}&1968 &A*+ DFS &heuristic&  \checkmark&\checkmark &- &\xmark  \\
			\hline
			\cite{hadlock1977}& 1977 &Hadlock's & Lee's&\checkmark&\checkmark &- &\xmark  \\ 
			\hline
			\cite{soukp1978}& 1978& Soukup's &BFS+ DFS& \checkmark&\checkmark & -&\xmark  \\
			\hline
			\cite{alamri2021autonomous}&20th c. &Wall follower & - & \xmark&\checkmark &perfect mazes &\xmark  \\
			\hline
			\cite{kivelevitch2010multi}& 2010 &Kivelevitch -Cohen& Tarry's  &\xmark&\xmark &perfect mazes &\checkmark \\
			\hline
			\cite{elshamarka2012}& 2012& Maze-Solving Robot &FF& \xmark&  \checkmark&-&\xmark \\
			\hline
			\cite{NAGHIZADEH2020113217}& 2020 & GridCell navigation & ANN  &\xmark&\xmark& -&\xmark \\
			\hline
			\cite{YOUSSEFI2021114907}& 2021 &Swarm intelligence &PSO  &\xmark& \xmark *& *beacon guided&\checkmark \\
			\hline
			\cite{alian2022multi}& 2022 & Alian's & Kivelevitch -Cohen &\xmark&\xmark &- &\checkmark \\
			\hline   
			\cite{Tjiharjadi2022_2}& 2022 & Improved Flood Fill& FF &\xmark& \checkmark &small maze&\checkmark \\
			\hline
			\cite{husain2022}& 2022 & Search and Rescue & AA+Dijkstra &\xmark& \xmark * &*beacon guided &\checkmark \\
			\hline
			\cite{oshada2023}& 2023&  Micromouse path planning&FF& \xmark&\checkmark &-&\xmark \\
			\hline
			\cite{nandy2023}& 2023 &Maze solving  & DQN &\xmark&\checkmark & -&\xmark \\
			\hline
			& 2024 & HEDAC maze exploration & HEDAC &\xmark&\xmark & -&\checkmark \\
			\hline
		\end{tabular}
		\label{tab:literature_overview_maze}
	\end{table*}
	It is therefore obvious that multi-agent exploration of unknown mazes represents a very interesting area that is approached with different methods. From Table ~\ref{tab:literature_overview_maze}, it is evident that there is not much research yet that focuses on exploring an unknown maze with unknown location of targets with a multi-agent system.
	For this reason, we present an algorithm for exploring an unknown maze by multiple agents with the goal of finding a hidden stationary target, e.g., an exit. Each agent explores a part of the maze and immediately communicates information about its current position, recognized walls and visited nodes.
	The proposed algorithm, hereafter referred to as the HEDAC maze exploration algorithm, is inspired by the HEDAC method presented in \cite{ivic2016ergodicity}. HEDAC is an ergodic multi-agent motion control algorithm that can achieve the given target density of generated agent trajectories. This algorithm has been modified and used in many different applications. The autonomous control for non-uniform spraying of multiple UAVs (unmanned aerial vehicles) presented in \cite{ivic2019autonomous} has proven that HEDAC-controlled UAV spraying swarms should significantly outperform UAVs operating with other existing path planning methods.
	A similar task is presented in \cite{low2022drozbot}, where a robotic system draws artistic portraits. The control of the drawing is represented as an ergodic coverage problem, which is then solved by a control algorithm based on HEDAC.
	UAV motion control techniques for search missions \cite{ivic2020motion} and \cite{ivic2022constrained} successfully utilize the HEDAC idea. Static and dynamic obstacle collision avoidance, non-radial sensing capabilities, irregular domains and heterogeneous UAV swarms are the most interesting improvements and extensions of HEDAC in these works.
	The application of HEDAC to trajectory planning for autonomous three-dimensional visual inspection of infrastructure with multiple UAVs is presented in \cite{ivic2023multi}. The inspection planning method proved to be flexible for different setup parameters and applicable to real inspection tasks. The distributed coverage control of multi-agent systems in uncertain environments using heat transfer equations presented in \cite{zheng2022distributed} is also based on HEDAC ideas. And one of the latest HEDAC applications is a full-body robot control method for exploring and investigating a specific region of interest, presented in \cite{bilaloglu2023whole}. This was achieved by extending HEDAC to applications where robots have multiple sensors all over the body (e.g. tactile skin) and utilize all sensors for optimal exploration of the given region. In the last paper,  \cite{bilaloglu2024tactile} an improved ergodic control strategy was presented that can be used with point clouds and is based on HEDAC. This technique facilitates closed-loop exploration by measuring the actual coverage using image processing. Unlike existing methods, it uses spectral acceleration to approximate the potential field from transient diffusion, which simplifies the process without the need for complex pre-processing and enables real-time closed-loop control. It has also been demonstrated that this new method can be successfully used to wash kitchenware with curved surfaces by cleaning the dirt detected by vision online.
	The algorithm presented in this paper is used to control the agents based on a potential field generated by modeling thermal conduction phenomena in a discrete maze. The algorithm provides a built-in cooperative agent behavior that includes collision avoidance, coverage coordination, and near-optimal path planning. In contrast to previous applications of the HEDAC method that investigate a non-variable continuous domain, this work considers a discrete dynamic system in both time and space with a variable spatial domain.
	In the proposed HEDAC application, the iterative Black-Red Successive Over-Relaxation (BR SOR) method has been used to compute the potential field, which brings a significant performance improvement compared to the techniques used in HEDAC in previous publications (finite difference, finite element and recently published spectral acceleration methods \cite{bilaloglu2024tactile}).
	Since in this work the domain is unknown and dynamically expands as the exploration of the maze progresses, using direct solvers requires not only solving the linear system but also initializing the system at each time step, which is very time consuming, while iterative solvers provide an elegant and effective solution to this problem since the  iterations can be easily extended to newly discovered nodes without the need to initialize the linear system.
	We should also emphasize that algorithms whose goal is to find the target, e.g. an exit, in an unknown maze are not necessarily suitable for visiting all maze nodes, i.e. mapping the entire unknown maze, and vice versa. HEDAC maze exploration algorithm is primarily designed to find a target in an unknown maze, but can also be successfully used to explore the entire unknown maze.
	Thus, we present not only a new method that represents a novelty in the approach to finding a hidden stationary target in an unknown maze by multiple agents, but also a new application of the HEDAC method that enables successful real-time exploration of an initially unknown and dynamically expanding discrete domain.
	The basic structure of the paper is as follows.
	The basic idea of the HEDAC method in 2D continuous space and time domain is described in Section \ref{sec:HEDAC}. The extension of the original algorithm and problem to the exploration of mazes and the implementation of the HEDAC method in the discrete space and time domain is presented in Section \ref{sec:formulation_discpr}. 
	In Subsection \ref{sec:exp_env}, the conditions and datasets on which we conducted the numerical tests are described.
	In Subsection \ref{sec:results_discussion}, we present the results of all the performed numerical tests. Specifically, we demonstrate the maze exploration process governed by the proposed HEDAC maze exploration algorithm. Additionally, we provide a comparison of the classical direct Gaussian solver and the iterative BR SOR linear solver used in the HEDAC algorithm for maze exploration.  The impact of the collision avoidance mechanism on the test results is also examined.
	In the same subsection, we compare the HEDAC algorithm for maze exploration with  with algorithms that have similar goals, presented in \cite{kivelevitch2010multi} and \cite{alian2022multi}. Furthermore, in this subsection we analyze the scalability of the HEDAC maze exploration algorithm, followed by the final Section \ref{sec_conclusion}, in which the conclusion is presented.

	\section{The HEDAC exploration in two-dimensional continuous domain}
	\label{sec:HEDAC}
	The HEDAC method is a centralized control of the motion of agents in the bounded two-dimensional domain $\Omega \subset \mathbb{R}^2$ with a Lipschitz continuous boundary. In this application, we will allow an expanding space domain such that $\Omega(t_i)\subset\Omega(t_j)\subset\Omega$ for $i<j$. We should emphasize that the domain changes dynamically as the maze is explored/discovered. For now, we consider the trajectories of the mobile agents as known and denote them by $\mathbf{z}_p : [0,t ] \to \mathbb{R}^2$, for $p= 1,2,\ldots, N$ where $N$ is the number of mobile agents. The coverage function $c(\mathbf{x},t)$ is cumulative and counts how often the location $\mathbf{x}$ has been visited up to time $t$.
	To achieve maximum coverage, the movement of the agent is controlled using a simple first-order kinematic motion model:
	\begin{equation}
		\frac{ d \mathbf{z}_p(t)}{ d t}=v_a \cdot \frac{\nabla u (\mathbf{z}_p(t),t)}{\left|\left|\nabla u (\mathbf{z}_p(t),t)\right|\right|},\quad p=1,\ldots N,
		\label{eq:motion_equation}
	\end{equation}
	with initial conditions
	$$
	\mathbf{z}_p(0)=\mathbf{z}_{0,p},\quad \mathbf{z}_{0,p}\in\Omega,\quad p=1,\ldots N ,
	$$
	where $v_a$ is the magnitude of the agent velocity and $u:\mathbb{R}^{3}\to \mathbb{R}$ is an attractive scalar field that represents a potential or a temperature and directs the agents via its gradient. It is obtained as a solution of the stationary heat equation:
	\begin{equation}
		\Delta u (\mathbf{x},t) = \alpha \cdot u(\mathbf{x},t) -s(\mathbf{x},t) 
		\label{eq:heat_equation}
	\end{equation}
	with the Neumann boundary condition
	\begin{equation}
		\frac{\partial u}{\partial \mathbf{n}} = 0,\textrm{ on }\partial \Omega.
		\label{eq:neumann_bc}
	\end{equation}
	Here $\Delta$ is a Laplace operator, $\mathbf{n}$ is outward normal, $\Delta u$  can be
	physically interpreted as heat conduction, while $\alpha u$, where $\alpha > 0$ in \eqref{eq:neumann_bc} represents a convective heat flow and it governs cooling over the entire space domain. With increasing convective cooling, the temperature field $u$ tends to approach the source field, so that more details about the uncovered areas become known. Therefore, an increase in $\alpha$ leads to an improvement in local coverage. The source term $s$ is a non-negative spatial field and can be formulated in different ways, for example as in \cite{ivic2016ergodicity,ivic2019autonomous,ivic2020motion}, but for this application it is defined as
	$$
	s(\mathbf{x},t)=max(0,1-c(\mathbf{x},t)).
	$$
	It emphasizes the areas insufficiently covered up to time $t$. The main goal of this method is to control the movement of agents so that the source converges to 0 everywhere, which corresponds to the entire domain being explored, or until the target is found.
	\section{Formulation of a discrete maze exploration}
	\label{sec:formulation_discpr}
	We consider a simple maze with uniformly wide corridors and simple branches. The maze can be represented as a 2D domain, where the walls of the maze are modeled by Neumann boundary conditions. The maze is initially unknown and may have a hidden target (depending on the problem we want to solve). There are inherent difficulties with variable space domains that make this problem difficult and computationally expensive, such as repeatedly remeshing the numerical domain. We show that we are able to use a multi-agent system driven by a HEDAC-based centralized algorithm to fully map the maze or find the target. However, instead of the real 2D problem presented in Section \ref{sec:HEDAC}, we solve a simplified version of it, which is obtained by discretization of the space and time domain. This makes the algorithm simple and it is easy to track the dynamic changes in the $\Omega(t)$ domain.
	\subsection{Maze domain}
	
	\begin{figure}[htb!]
		\centering
		\begin{subfigure}[b]{0.4\textwidth}
			
			\includegraphics[width=\textwidth]{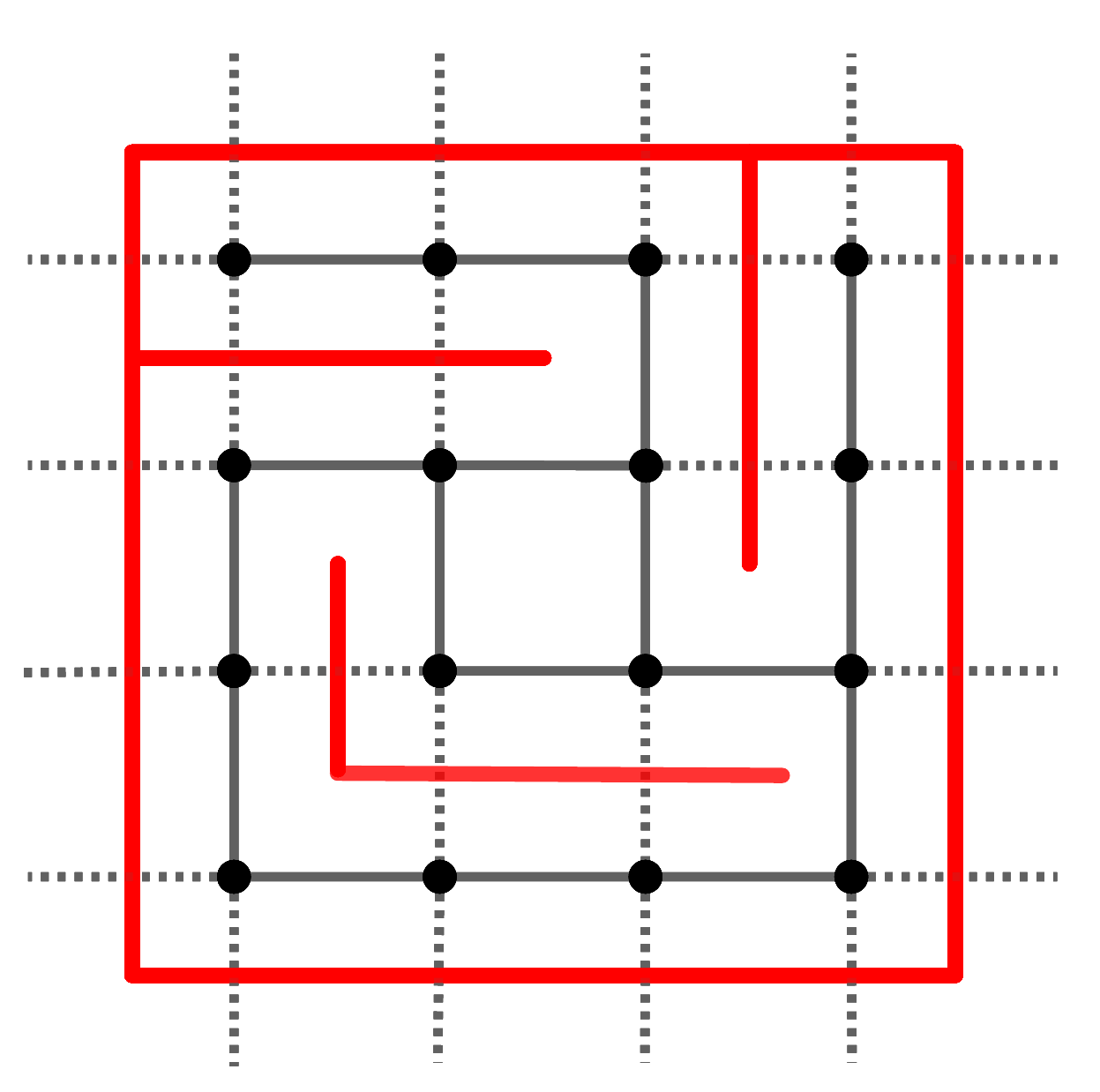}
			\caption{Maze structure}
			\label{fig:maze1}
		\end{subfigure}	
		\begin{subfigure}[b]{0.4\textwidth}
			
			\includegraphics[width=\textwidth]{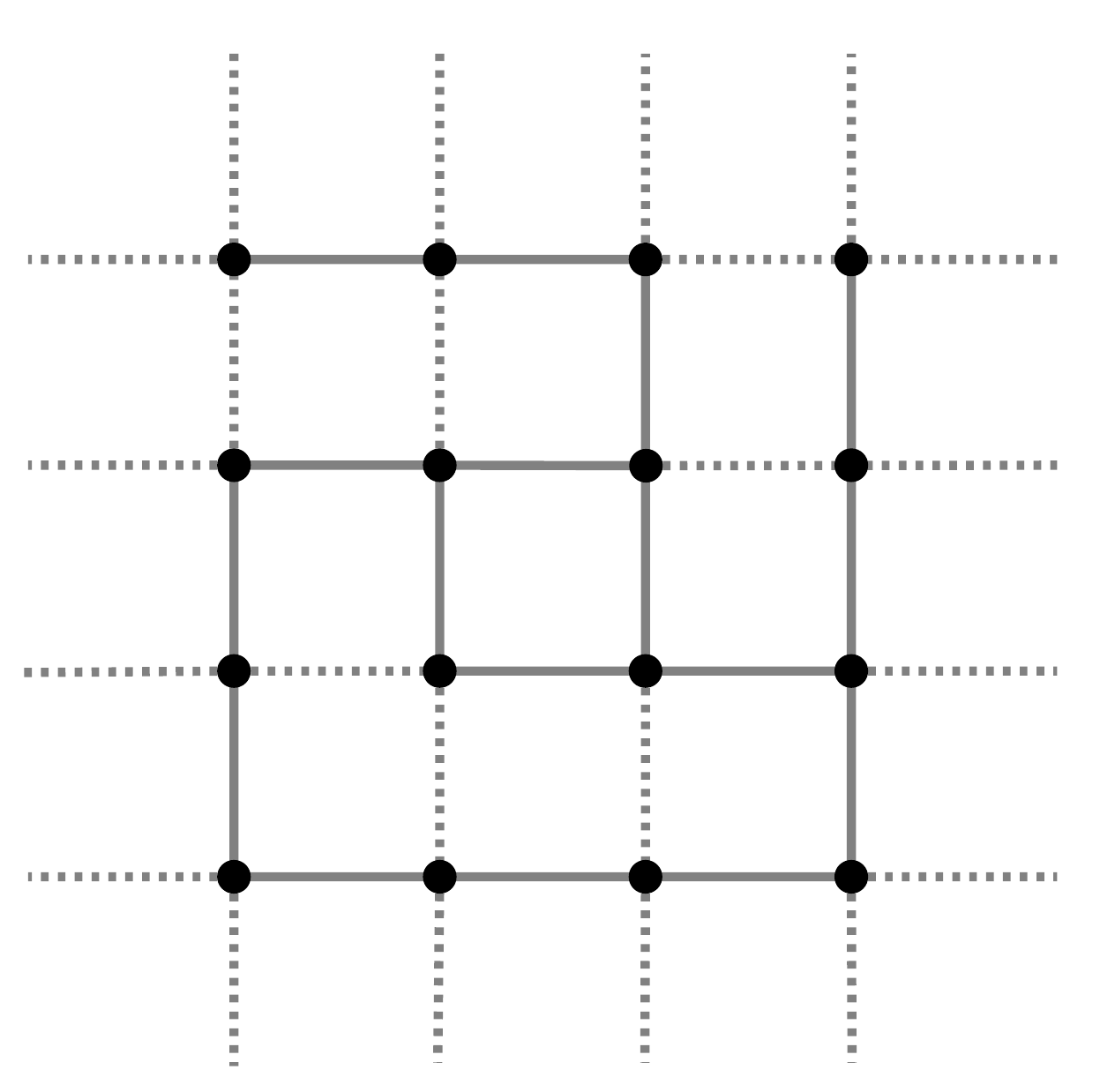}
			\caption{Corresponding numerical grid}
			\label{fig:maze2}
		\end{subfigure}
		
		\caption{An example of a maze, represented as a numerical grid. Maze walls are represented by a red line, and a possible location in the maze where an agent can stand is represented by black nodes. The edges between these nodes are only used to provide information about whether the two nodes are adjacent or not. A solid black line represents a real connection between two nodes, and a dotted black line represents an obstacle between two nodes.}
		\label{fig:maze}
	\end{figure}
	To simplify the application, we consider only a simple maze with orthogonal corridors in which we can place the nodes of the discrete numerical grid in a 2D coordinate system with integer coordinates. Each node of the numerical grid in Figure \ref{fig:maze} represents a position that an agent can occupy. The agent can move orthogonally around a grid node at each time step if the surrounding maze barriers allow this. The distances between neighboring nodes are all the same, so that the agent's velocity remains constant.
	\subsection{Agent motion control}
	\label{sec:control}
	Since we focus on the exploration of the maze and the cooperation of the agents, in this paper we use the simplest first-order kinematic model for the agents' motions, which neglects the mass and inertia of the agents and is constrained to orthogonal motion. We also introduce a new method for collision avoidance when moving through the maze. This algorithm allows agents to stop and wait until another agent is no longer blocking their motion. We have also left the option to allow collisions if an application of the algorithm requires it.
	At the beginning, the maze is unknown and the main priority of the agent system is to explore the maze. The agents explore the surroundings of their starting position with limited knowledge and focus exclusively on neighboring nodes that are known to them. The agents begin their movement from their starting position and their main goal is to search the unknown maze and find the target. They can only move one step up, down, left or right if there is no wall preventing them from doing so. This means that the directions in \eqref{eq:motion_equation} are limited to only 4 possibilities, and the agent chooses the one with the greatest potential (which corresponds to the gradient $\nabla\mathbf{u}$ from \eqref{eq:motion_equation}). The agents cannot go or see through walls, but they can exchange information about their position and the explored parts of the maze.
	The time domain is discrete $t_i = i, i=0,\ldots,n$. After each time step, the numerical domain and the set of indices $I(t_k)$ are updated with discovered  nodes. Therefore, the trajectories of the agents can be described by functions $z_p:\mathbb{N}_0 \to \mathbb{N}$, for $p=1,2,\ldots,N$.
	Since the space and time domains are discrete, we can represent the coverage function $c(x_{i,j},t_k)$ at time step $t_k$ for the maze node $x_{i,j}$ as a cumulative sum of ones and zeros depending on how many times the agents have visited the node $x_{i,j}$ up to time step $t_k$. For each agent that visits the maze node $x_{i,j}$ in each time step, 1 is added to the cumulative sum.
	In this discrete case, the source term $s$ is represented as a non-negative scalar field
	$$
	s(x_{i,j},t_k)=max(0,1-c(x_{i,j},t_k)) \cdot S(t_k)
	$$
	which emphasizes the insufficiently covered maze nodes $x_{i,j}$ at time step $t_k$, where $S(t_k)$ is a scaling factor that can be chosen to help the convergence of a linear solver. In our numerical examples, we have used $S(t_k)=1$.
	
	\begin{figure}[h!]
		\begin{subfigure}[b]{0.4\textwidth}
			\includegraphics[width=\textwidth]{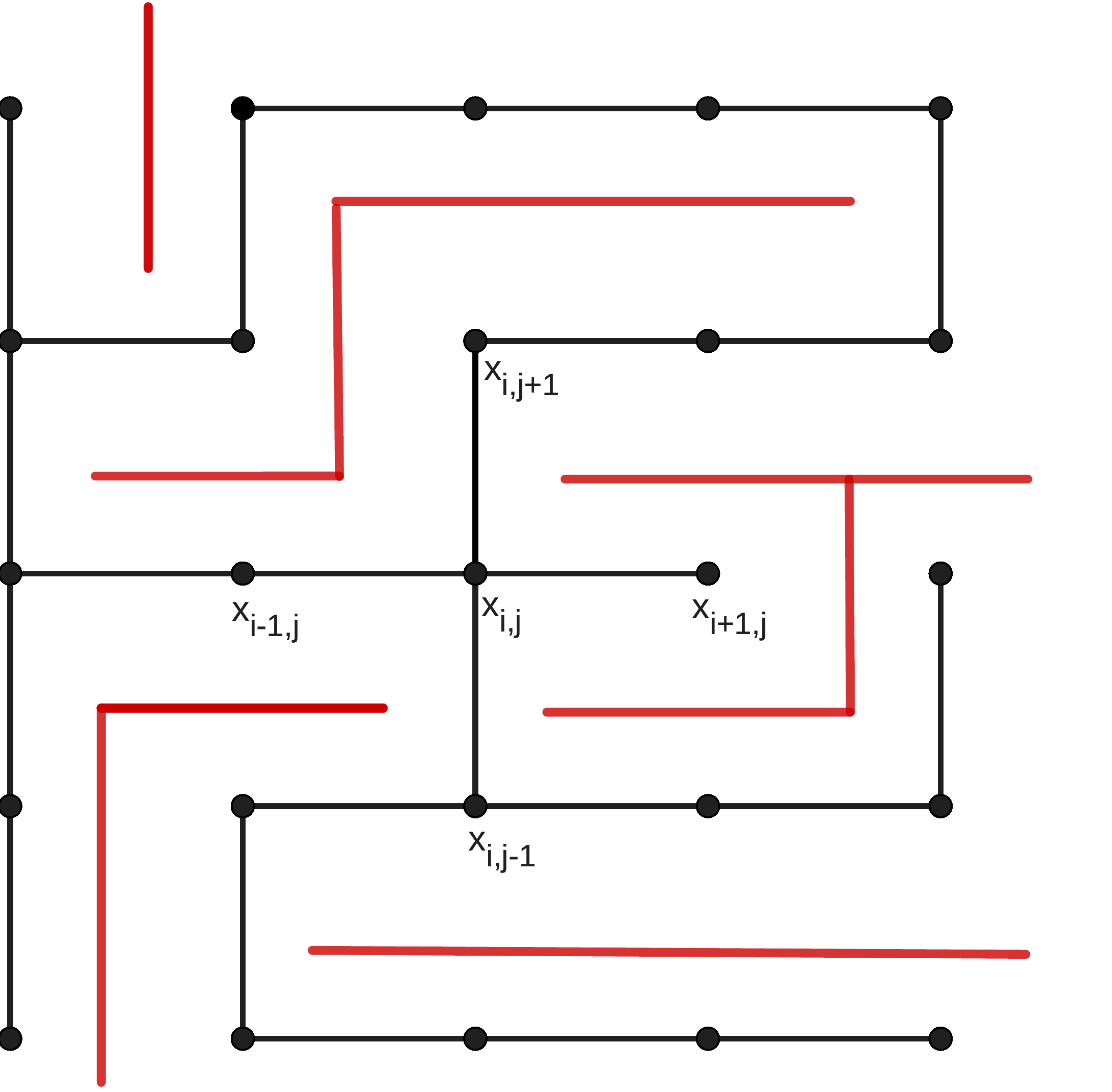}\centering
			\caption{Maze node  $x_{i,j}$ with all four  neighbors  }
			\label{fig:graph4}
		\end{subfigure}
		\begin{subfigure}[b]{0.4\textwidth}\centering
			\includegraphics[width=\textwidth]{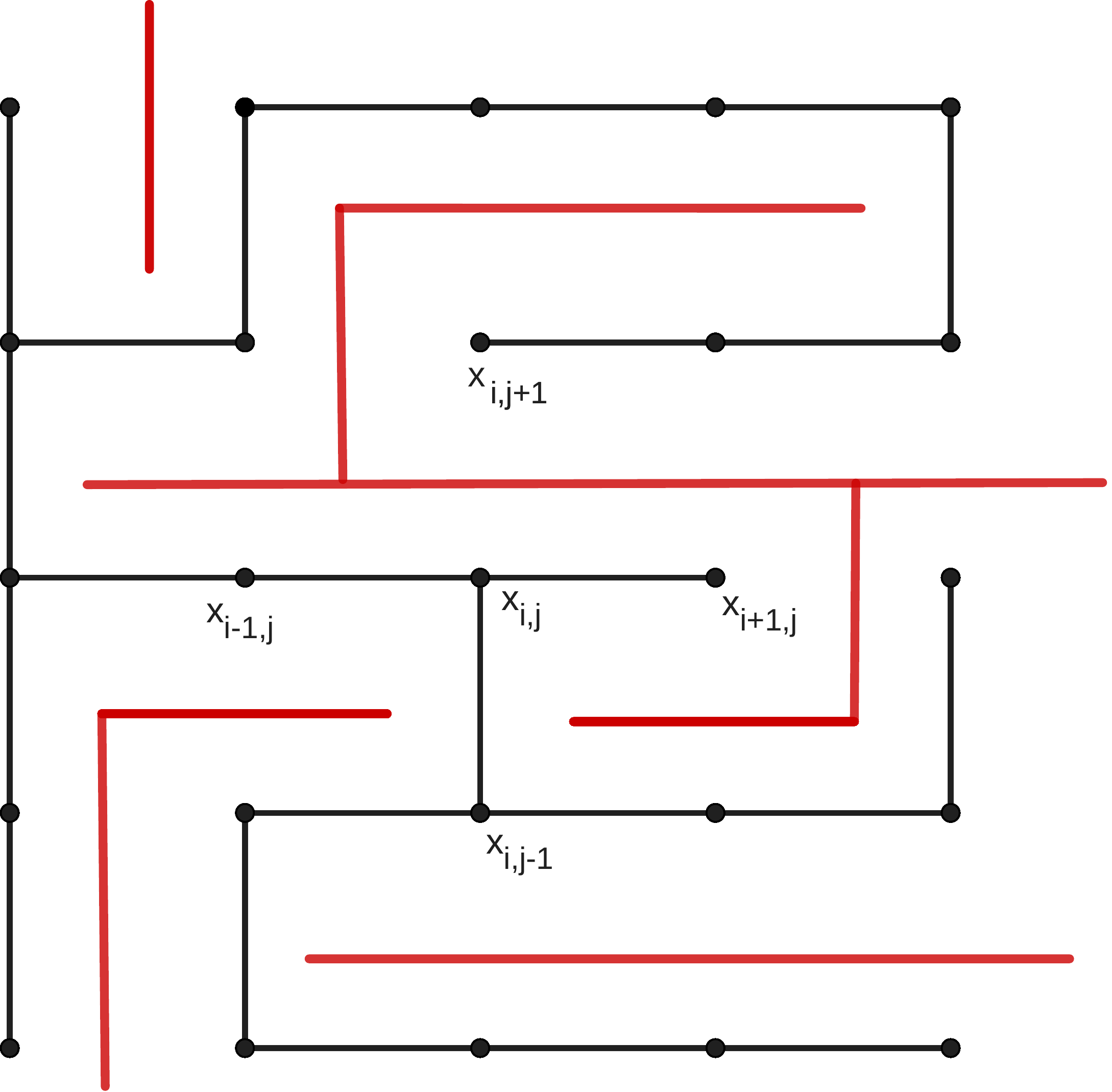}
			\caption{Maze node  $x_{i,j}$ with three  neighbors  }
			\label{fig:graph3}
		\end{subfigure}
		\caption {Examples of the maze nodes $x_{i,j}$ neighborhood depending on the arrangement of the maze walls. Maze  walls are shown with red lines, while the nodes and the connections between them are shown in black.} 
		\label{fig:xneighbours} 
	\end{figure}
	Figure~\ref{fig:graph4}, shows an example of a maze node $x_{i,j}$ that has no obstacles in the local neighborhood, so that the value $u_{i,j}$ of the solution of the PDE equation \eqref{eq:heat_equation} can be obtained by discrete finite difference approximation as
	\begin{multline*}\frac{u_{i-\delta x, j}-2u_{i, j}+u_{i+\delta x, j}}{(\delta x)^2} +\frac{u_{i, j-\delta y}-2u_{i, j}+u_{i, j+\delta y}}{(\delta y)^2}=\\
		=
		\alpha \cdot u_{i,j}-s(x_{i,j},t_k)\end{multline*}
	because we use $\delta x=\delta y=1$ we get
	\begin{equation}
		(4+\alpha)u_{i,j} - u_{i-1,j}-u_{i+1,j}-u_{i,j-1}-u_{i,j+1} =s(x_{i,j},t_k).
		\label{eq:iteration1}
	\end{equation}
	It remains to explain what to do with the more complicated case where an obstacle exists between a node and at least one of its neighboring nodes, as in Figure~\ref{fig:graph3}. Given the structure of the maze shown in Figure~\ref{fig:maze}, we can consider these disconnections/walls between the nodes as boundaries and impose Neumann boundary conditions there, since the Neumann boundary condition actually represents an ideal insulating wall from the point of view of the conductivity model, a barrier in the domain through which no heat is conducted.
	In the particular case in Figure~\ref{fig:graph3}, we can impose the Neumann boundary condition on the top of the node at position $(i,j)$. Second-order approximation of the Neumann boundary condition
	$$\frac{\tilde{u}_{i, j+\delta y}-u_{i, j-\delta y}}{2\delta y}=0$$
	where $\tilde{u}$ marks a temporary node, modifies the \eqref{eq:iteration1} to obtain
	
	\begin{equation}
		(4+\alpha)u_{i,j}-u_{i-1, j}-u_{i+1, j}-2u_{i, j-1}=s(x_{i,j},t_k).
		\label{eq:iteration2}
	\end{equation}
	There is another interesting case in Figure~\ref{fig:graph3} concerning the node at position $(i,j+1)$, we impose a Neumann boundary condition on the top, bottom and left side of the node. Due to the Neumann boundary condition, the partial derivative vanishes in the vertical direction and the equation reduces to a 1D case:
	\begin{equation}
		(2+\alpha)u_{i,j+1}-2u_{i+1, j+1}=s(x_{i,j},t_k).
		\label{eq:iteration3}
	\end{equation}
	
	All special cases \eqref{eq:iteration1},\eqref{eq:iteration2},\eqref{eq:iteration3} can be written in a general form that applies to all nodes:
	\begin{equation}
		(\sum_{l\in L} a_{i,j}(l)+\alpha)u_{i,j} - \sum_{l\in L} a_{i,j}(l)u_l =s(x_{i,j},t_k),
		\label{eq:iterationGeneral}
	\end{equation}
	where $L=\{(i+1,j),(i-1,j),(i,j+1),(i,j-1)\}$ is set of possible neighbor indices for maze node $x_{i,j}$. The interface function $a_{i,j}(l)\in \{0,1,2\}$ depends on the structure of the walls around the node $(i,j)$ and $0\leq \sum_{l\in L} a_{i,j}(l) \leq 4$ applies in general. In the case we are dealing with, where each maze node has at least one neighbor, holds $2 \leq \sum_{l\in L} a_{i,j}(l) \leq 4$ .The equation \eqref{eq:iterationGeneral} is valid for nodes with all four neighboring nodes and also in the absence of some neighboring nodes by applying Neumann boundary conditions to the finite difference approximation of the derivatives in \eqref{eq:heat_equation}. The solution of this linear system is a scalar field $u,$ which is used to control the agents in the maze.
	Each agent moves one step at time $t_k$ to a neighboring node where the scalar field $u$ has a larger value. A simple iterative numerical method can be used to approximate the solution by successively averaging over each node:
	\begin{equation}
		u^{r+1}_{i,j} = \frac{\sum_{l\in L} a_{i,j}(l)u^r_l+s(x_{i,j},t_k)}{\sum_{l\in L} a_{i,j}(l)+\alpha},
		\label{eq:iteration}
	\end{equation}
	where $r$ counts the iterations until convergence.
	Equation \eqref{eq:iteration} describes an iterative Jacobi method. The iterations are very simple and can be performed in parallel using the cumulative computing power of the multi-agent system. Simple iterations allow us to solve a growing domain without re-meshing, as new numerical nodes are simply added to a list and assigned to one of the processors with the lowest load.
	
	\begin{thm} \label{thm:jacobi}The iterations defined in \eqref{eq:iteration} converge to a unique solution $\tilde{u}$ for arbitrary initial values of $u$. In addition, the local maximums of $\tilde{u}$ are located in the unvisited node.
	\end{thm}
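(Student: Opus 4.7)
The plan has two independent parts: establishing convergence of \eqref{eq:iteration} to a unique fixed point, and then characterizing where the local maxima of that fixed point can occur.

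For the convergence/uniqueness part, I would rewrite the full system of equations \eqref{eq:iterationGeneral} as a linear system $A\mathbf{u}=\mathbf{s}$, where the row corresponding to node $(i,j)$ has diagonal entry $\sum_{l\in L}a_{i,j}(l)+\alpha$ and off-diagonal entries $-a_{i,j}(l)\le 0$. Because $\alpha>0$, this matrix is \emph{strictly diagonally dominant}: the absolute value of the diagonal strictly exceeds the sum of absolute values of the off-diagonal entries in every row. By the standard diagonal-dominance criterion this implies that $A$ is invertible (so the solution $\tilde{u}$ exists and is unique) and that the Jacobi iteration matrix $M=I-D^{-1}A$ has spectral radius $\rho(M)<1$, which is exactly the fixed-point iteration \eqref{eq:iteration}. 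Hence the iterates converge to $\tilde{u}$ from any initial data. I would spell out the diagonal-dominance inequality in one line and cite the classical Jacobi convergence theorem rather than rederive it.

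For the location of local maxima, I would first prove a discrete maximum principle: $\tilde{u}\ge 0$ everywhere. Since $\mathbf{s}\ge 0$ and $A$ is an $M$-matrix (nonnegative diagonal, nonpositive off-diagonal, strictly diagonally dominant with positive diagonal), its inverse is entrywise nonnegative, so $\tilde{u}=A^{-1}\mathbf{s}\ge 0$. Now suppose node $(i,j)$ is a local maximum of $\tilde{u}$, so $\tilde{u}_{l}\le \tilde{u}_{i,j}$ for every neighbor $l$ with $a_{i,j}(l)>0$. Rewriting \eqref{eq:iterationGeneral} as
\begin{equation*}
s(x_{i,j},t_k)=\sum_{l\in L}a_{i,j}(l)\bigl(\tilde{u}_{i,j}-\tilde{u}_l\bigr)+\alpha\,\tilde{u}_{i,j},
\end{equation*}
the first sum is nonnegative and $\alpha\tilde{u}_{i,j}\ge 0$, so $s(x_{i,j},t_k)\ge \alpha\tilde{u}_{i,j}\ge 0$. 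If $(i,j)$ were a \emph{visited} node, then $s(x_{i,j},t_k)=0$, which forces $\tilde{u}_{i,j}=0$ and (from the same identity) $\tilde{u}_l=\tilde{u}_{i,j}=0$ for every neighbor $l$ with $a_{i,j}(l)>0$. Propagating this along the connected subgraph of visited nodes adjacent to $(i,j)$ shows that $\tilde{u}$ vanishes on the whole connected component, which is incompatible with $(i,j)$ being a (strict) local maximum unless the component contains no unvisited neighbor — but then the value $0$ is not a local maximum of a nonnegative function that is positive somewhere. Hence every local maximum must sit at an unvisited node.

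The main obstacle, and the only step that requires a bit of care, is the maximum principle $\tilde{u}\ge 0$; once this is in hand, the local-max characterization is an immediate rearrangement of \eqref{eq:iterationGeneral}. The $M$-matrix route is the cleanest, but I would also note an alternative: run the Jacobi iteration from $u^0\equiv 0$; since $\mathbf{s}\ge 0$ and all coefficients in \eqref{eq:iteration} are nonnegative, induction gives $u^r\ge 0$ for all $r$, and the limit $\tilde{u}$ inherits nonnegativity. This avoids invoking $M$-matrix theory explicitly and keeps the argument self-contained.
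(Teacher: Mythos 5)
Your proposal is correct. The first half is exactly the paper's argument: the system matrix is strictly diagonally dominant for $\alpha>0$, hence nonsingular, and the classical theorem gives convergence of the Jacobi iteration \eqref{eq:iteration} from any starting vector. For the location of the local maxima you take a genuinely different route. The paper argues in one line: at a visited local maximum $s(x_{i,j},t_k)=0$, so $\tilde u_{i,j}=\frac{\sum_{l}a_{i,j}(l)\tilde u_l}{\sum_{l}a_{i,j}(l)+\alpha}<\frac{\sum_{l}a_{i,j}(l)\tilde u_l}{\sum_{l}a_{i,j}(l)}\le\max_{l}\tilde u_l$, a contradiction. You instead first prove the discrete maximum principle $\tilde u\ge 0$ (via the $M$-matrix property, or by running Jacobi from the zero vector), rewrite \eqref{eq:iterationGeneral} as $s=\sum_{l}a_{i,j}(l)(\tilde u_{i,j}-\tilde u_l)+\alpha\tilde u_{i,j}$, deduce that a visited local maximum forces $\tilde u_{i,j}=0$ along with all its neighbours, and propagate to a contradiction. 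Your version is longer but buys rigor: the paper's strict inequality silently requires $\sum_{l}a_{i,j}(l)\tilde u_l>0$ (a nonnegative numerator over a strictly larger denominator is only \emph{strictly} smaller when the numerator is positive), i.e.\ it implicitly uses the nonnegativity you establish explicitly. Both arguments share the residual degenerate case $s\equiv 0$ (fully explored maze), where $\tilde u\equiv 0$ and non-strict local maxima are everywhere; this is harmless for the intended deadlock-avoidance conclusion.
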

	\begin{proof}
		Equation \eqref{eq:iteration} is a Jacobi iterative solver for a linear system $Au=s$ which comes from a finite difference approximation of equation \eqref{eq:heat_equation}. The matrix of the linear system $A$ is strictly diagonally dominant for $\alpha>0$ from which it follows that the solution is unique and the Jacobi iteration converges for arbitrary initial vector $u$.
		
		Let's suppose that $u$ attains a local maximum value at some node ${i,j}$ which was visited before time $t_k$ it follows that $s(x_{i,j},t_k)=0$. From \eqref{eq:iteration} it follows	
		$$u_{i,j}=\frac{\sum_{l\in L} a_{i,j}(l)u_l}{\sum_{l\in L}a_{i,j}(l)+\alpha}<\frac{\sum_{l\in L} a_{i,j}(l)u_l}{\sum_{l\in L}a_{i,j}(l)}\leq\max_{l\in L} (u_l)$$ 
		which is a contradiction. 
		It follows that the local maximum is attained at some unvisited node.
	\end{proof}
	
	Theorem \ref{thm:jacobi} shows that simple iterations given by formula \eqref{eq:iteration} always converge and can be used to calculate the potential field $u$. The potential filed is used by the agents to navigate through the maze following standard steepest ascent hill climbing algorithm. Potential deadlock would happen only if the agent somehow reached a local maximum but this is impossible because   
	the local maximum can't be located in the visited node. The agents move and follow the steepest ascent algorithm and they will continue to do so until they have visited all available nodes, since the local maximum is constantly shifting towards the unvisited nodes. So, the Theorem \ref{thm:jacobi} guaranties the maze will be fully explored if all nodes can be reached by the agents.
	
	In addition, the iterations \ref{eq:iteration} are very simple, which simplifies the update of the linear system at each time step, so that the growth of the spatial domain is easy to implement and the memory requirements are very low.
	A disadvantage of this approach is that the convergence of the Jacobi method slows down for a large numerical domain. This convergence can be improved by Gauss-Seidel iterations, using freshly computed values of the state vector for each component of the state vector $u$. Although Gauss-Seidel should improve convergence, the parallelization of the iterations is not as straightforward as with the Jacobi method. In our implementation, we use the Successive Over-Relaxation (SOR) method, which improve the convergence of the Gauss-Seidel method but depend on the over-relaxation parameter $\omega$.  More specifically, we employ the Black-Red Successive Over-Relaxation (BR SOR) method to facilitate the parallelization of computations. This is achieved by partitioning the discovered nodes into two sets following a black-red strategy.
	We form two sets of nodes, black,  $B=\{ (i,j)\in I(t_k) | i+j\textrm{(mod 2)}=0\}$ and, red,  $R=\{ (i,j)\in I(t_k) | i+j\textrm{(mod 2)}=1\}$. So according to this definition of sets $B$ and $R$,  neighbors of any element from $B$ are not contained in set $B$ but in set $R$, and vice versa. And because of such a definition of sets $B$ and $R$, we can compute one iteration of the BR SOR method in two consecutive steps:
	\begin{multline}
		u^{r+1}_{i,j} =u^r_{i,j}+ \omega \left(\frac{\sum_{l\in L} a_{i,j}(l)u^r_l+s(x_{i,j},t_k)}{\sum_{l\in L} a_{i,j}(l)+\alpha} -u^r_{i,j}\right),\\ \textrm{for all } (i,j)\in B	
		\label{eq:BackiterationSOR}
	\end{multline}
	\begin{multline}
		u^{r+1}_{i,j} =u^r_{i,j}+ \omega \left(\frac{\sum_{l\in L} a_{i,j}(l)u^{r+1}_l+s(x_{i,j},t_k)}{\sum_{l\in L} a_{i,j}(l)+\alpha} -u^r_{i,j}\right),\\ \textrm{for all } (i,j)\in R			
		\label{eq:RediterationSOR}
	\end{multline}
	where $r$ represents the iteration number. The iterations in the steps \eqref{eq:BackiterationSOR} and \eqref{eq:RediterationSOR} are very simple, and each of them can be calculated in parallel. The optimal choice of the parameter $1<\omega < 2$ improves the convergence by two orders of magnitude compared to the Jacobi method.
	The optimal value of the relaxation parameter that maximizes the convergence of the SOR, as well as BR SOR method depends on the problem under consideration and the size of the domain, but is typically $1.5<\omega < 2.$
	To the best of our knowledge, determining the optimal relaxation parameters for the BR SOR method (given with \eqref{eq:BackiterationSOR} and \eqref{eq:RediterationSOR}) applied to the modified Helmholtz  equation \eqref{eq:heat_equation} with Neumann boundary conditions  on a rectangular grid remains an open problem. The optimal relaxation parameter for SOR method applied to the
	Poisson equation  with Dirichlet boundary conditions on a regular grid in two or more spatial dimensions can be found in \cite{watkins2004fundamentals} and \cite{yang2009optimal}. This can serve as a motivation for determining the theoretical optimal relaxation parameter for the SOR method applied to the modified Helmholtz equation.  We determined the over-relaxation parameter experimentally and found that its optimal value for the presented problem is $1.05<\omega < 1.75$ and depends on the  parameter $\alpha$ and the size of the spatial domain. If the relaxation parameter is chosen appropriately, the computational effort of the applied iterative method is competitive with commonly  used iterative solvers.
	Now that we know how $u$ is determined, we can write down the rules and the Algorithm \ref{alg:1} for controlling the movement of the agents:
	\begin{itemize}	 
		\item At the beginning, the maze is unknown, the agents appear at their initial positions, a random node is set as the exit of the maze, and the basic task of the agent system is to explore the maze and find the exit node.
		\item The agents have a defined order in which they choose their next move. An agent that has an associated trajectory $ \mathbf{z}_1$ at a given time $t$ decides first to which node it moves, while an agent that is associated with a trajectory $ \mathbf{z}_N$ decides last to which node it moves.
		\item The agent can move one step up, down, left or right.
		\item At time $t$ the agent moves to the neighboring node with the largest value of $u$.
		\item In addition, the agent has the information whether one of the other agents is currently standing on one of the neighboring nodes and does not consider this node for its next position if the anti-collision condition, hereinafter referred to as AC, is switched on. If the agent has no better option, it can wait in place until its path is clear.
		\item When the agent's new position is determined, the source at this node is set to $0$, so that agents that have not yet taken a step during time $t$ can determine their new positions based on the updated information (right side of the linear system \eqref{eq:BackiterationSOR},\eqref{eq:RediterationSOR}).
		\item At the end of time $t$, the agents update 
		their newly discovered nodes so that at the beginning of time $t+1$ complete data about visited and newly discovered nodes can be exchanged between all agents.
		\item The search ends when one of the agents has found the starting node or the maze has been fully explored, i.e. $\sum_{I(t)} S_{i,j} = 0$.
	\end{itemize}
	\begin{algorithm}[h!]
		\SetAlgoLined
		\footnotesize
		initialization \;	
		\For {$t=0, \, t= \max_t,\, t=t+1 $}{
			exchange of information about newly discovered nodes between agents; \\  
			update of a linear system \eqref{eq:BackiterationSOR},\eqref{eq:RediterationSOR};\\ 
			
			\For {$i=1, \, i=N, \, i=i+1$}{		
				iteratively solve a linear system \eqref{eq:BackiterationSOR},\eqref{eq:RediterationSOR};\\  
				next$\_$pos for agent $i$ = non ocuppied neighbouring nodes where is the largest value of the scalar field $u$ or current position if all neigbouring nodes are ocuppied;\\			
				\If{ new position of agent $i$ was previously unvisited}{update of right side of linear system \eqref{eq:BackiterationSOR},\eqref{eq:RediterationSOR} }
			}
			\eIf{ target  found }{break\;}
			
		}
		\caption{ \footnotesize Maze exploration with collision avoidance mechanism}
		\label{alg:1}
	\end{algorithm}

	\section{Performance analysis}
	
	\subsection{Experimental environment}
	\label{sec:exp_env}
	
	\begin{table*}[!hb]	
		\centering\footnotesize
		\caption{Summary of total number of simulations of maze exploration}
		\label{tab:experimental_environment}
		\begin{tabular}{lccccccccccc}
			\hline
			\multicolumn{2}{l}{Maze shape} & \multicolumn{2}{c}{10 $\times$ 10} & \multicolumn{2}{c}{20 $\times$ 20} & \multicolumn{2}{c}{50 $\times$ 50} & \multicolumn{2}{c}{100 $\times$ 100} & \multicolumn{1}{c}{200 $\times$ 150} & \multicolumn{1}{c}{400 $\times$ 150} \\
			\multicolumn{2}{l}{Number of agents} & \multicolumn{2}{c}{1, 2, 3, 4, 5} & \multicolumn{2}{c}{1, 3, 5, 10} & \multicolumn{2}{c}{5, 10, 15} & \multicolumn{2}{c}{10, 20} & \multicolumn{1}{c}{20} & \multicolumn{1}{c}{20} \\
			\multicolumn{2}{l}{Wall density} & 45\%*  & 30\%  & 47.5\%* & 30\% & 40\% & 30\% & 40\% & 30\% & 30\% & 30\% \\  
			\multicolumn{2}{l}{Number of maze layouts} & 20  & 20  & 20 & 20 & 10 & 10 & 10 & 10 & 10 & 10\\  
			\hline
			Algorithm & \makecell{No. of\\initial\\config.} \\
			\hline
			HEDAC, AC on, DGESV & 5 & 500 & 500 & 400 & 400 & 150 & 150 & - & - & - & -\\
			HEDAC, AC off, DGESV & 5 & 500 & 500 & 400 & 400 & 150 & 150 & - & - & - & -\\
			HEDAC, AC on, SOR & 5 & 500 & 500 & 400 & 400 & 150 & 150 & 100 & 100 & 50 & 50\\
			HEDAC, AC off, SOR & 5 & 500 & 500 & 400 & 400 & 150 & 150 & 100 & 100 & 50 & 50\\
			HEDAC, AC on, SOR, known & 5 & 500 & 500 & 400 & 400 & 150 & 150 & - & - & - & -\\
			HEDAC, AC off, SOR, known & 5 & 500 & 500 & 400 & 400 & 150 & 150 & - & - & - & -\\
			Kivelevitch and Cohen, AC on  & 5 & 500 & 500 & 400 & 400 & - & - & - & - & - & -\\
			Kivelevitch and Cohen, AC off  & 5 & 500 & 500 & 400 & 400 & - & - & - & -& - & -\\
			Alian, AC on  & 5 & 500 & 500 & 400 & 400 & - & - & - & - & - & - \\
			\hline		
			\multicolumn{12}{l}{ * \scriptsize maximum possible wall density for given maze shape resulting with a perfect maze layouts}
		\end{tabular}
	\end{table*}
	\begin{figure*}[!hb]
		\centering
		\includegraphics[width=0.33\linewidth]{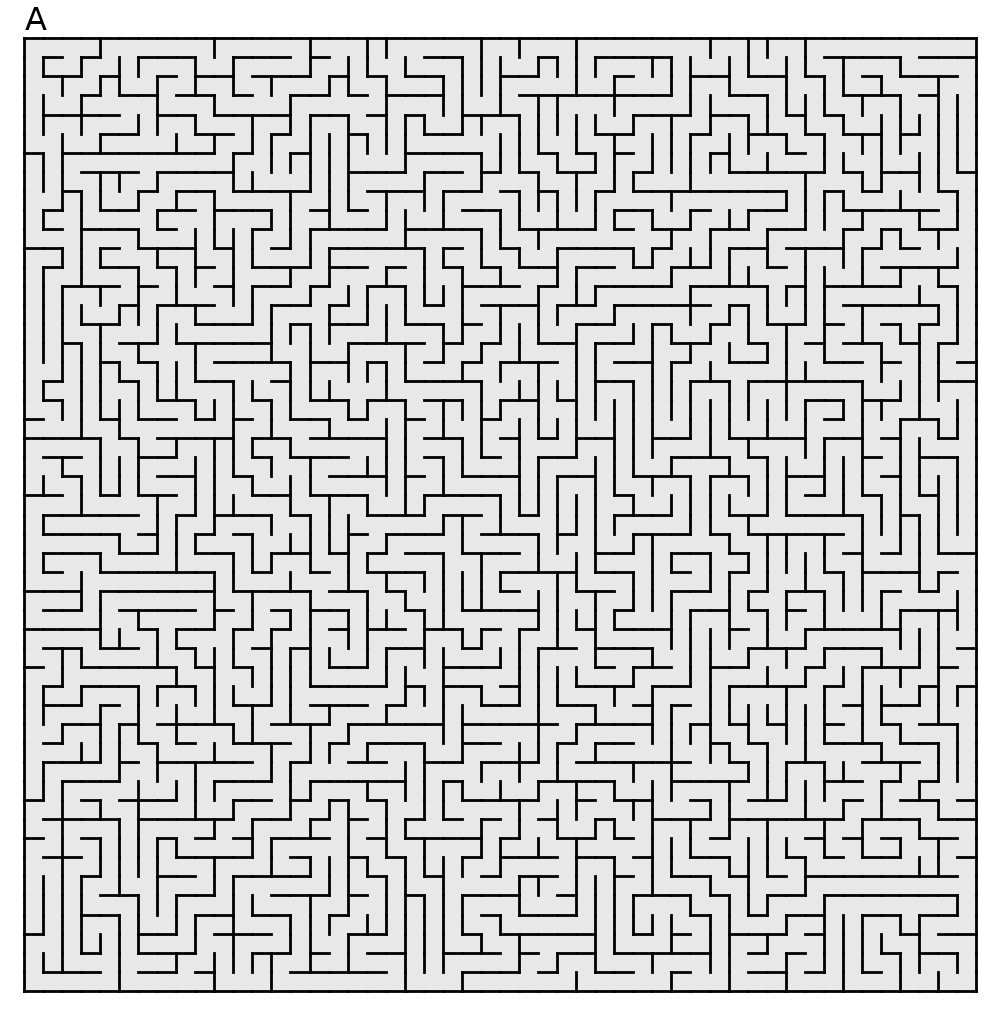}
		\includegraphics[width=0.33\linewidth]{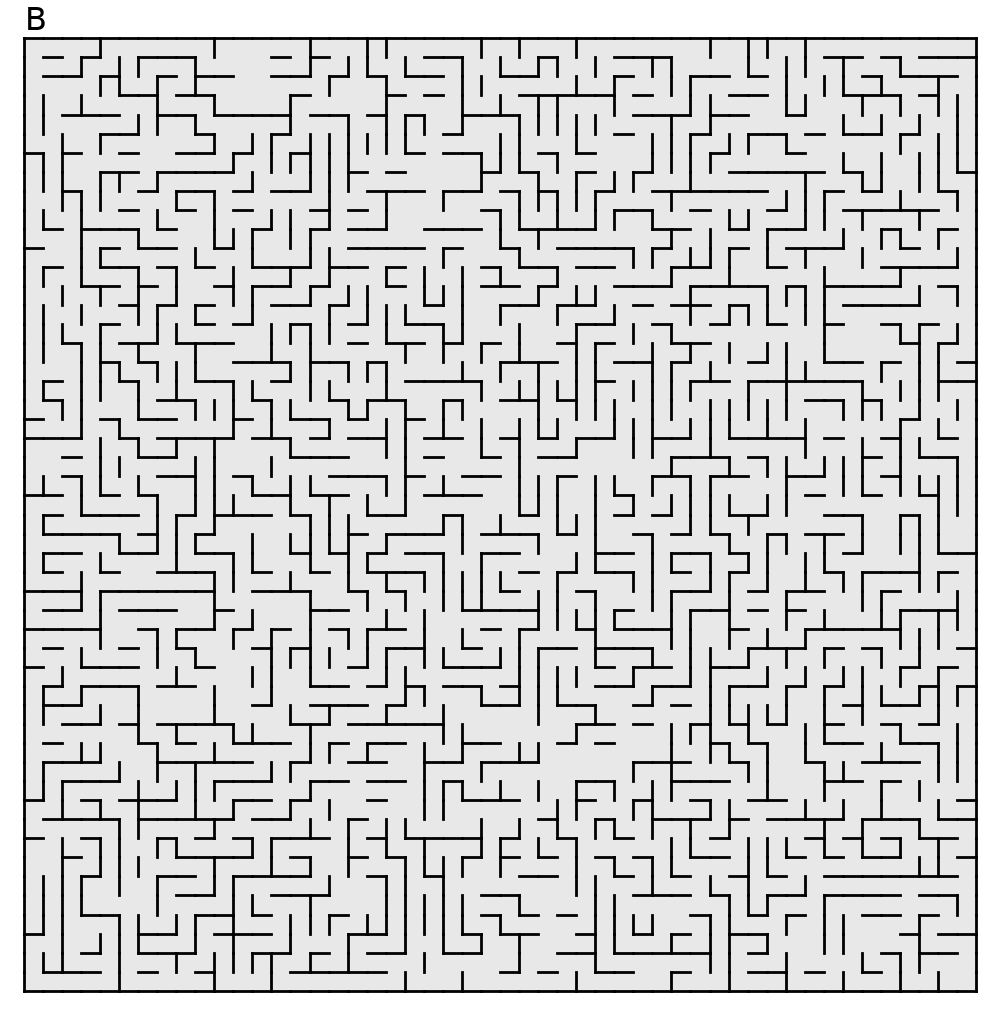}
		\includegraphics[width=0.33\linewidth]{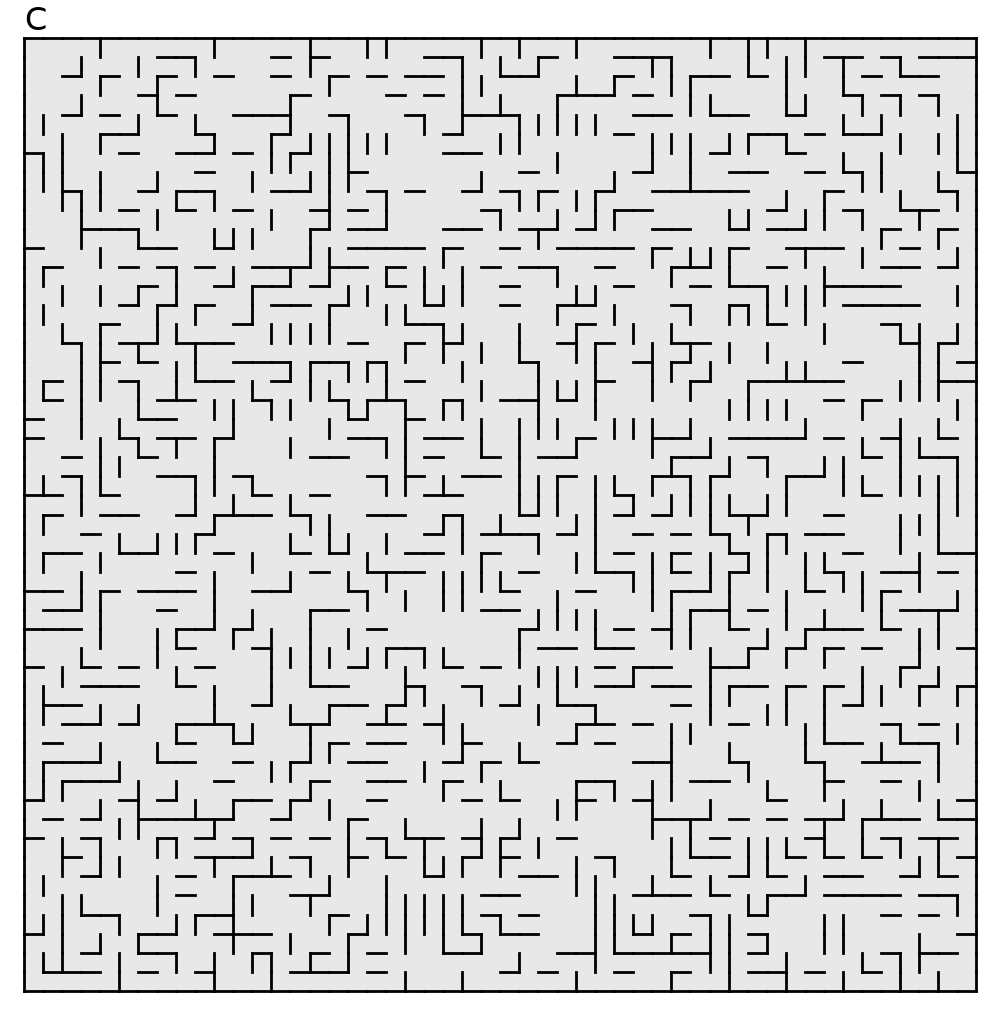}
		\caption{Examples of $50 \times 50$ mazes with different wall densities: a perfect maze  with 49\% (A), 40\% (B) and 30\% (C).}
		\label{fig:maze_50x50_wall_density}
	\end{figure*}
	
	To properly evaluate the HEDAC maze exploration algorithm, we conducted many maze exploration simulations, varying different exploration methods, maze layouts, the number of agents and their initial positions, and algorithm settings such as anti collision. In this section, we will describe detailed information about the experimental environment, the datasets used, and the methods used to interpret the results specifically tailored to each test type.
	
	The number of maze exploration simulations performed for each combination of maze configuration, number of agents and algorithm setup is given in Table~\ref{tab:experimental_environment}.
	
	A total of 6 different shapes of mazes were used in the simulations performed, namely $10 \times 10$, $20 \times 20$, $50 \times 50$, $100 \times 100$, $200 \times 150$ and $400 \times 150$, as shown in Table~\ref{tab:experimental_environment}. Initially, maze layouts are generated using the backtracking algorithm implemented in the Mazelib Python module. Each of the generated layouts represents a perfect maze in which any two maze nodes can be connected by a unique path \citep{bellot2021generate}. Exploring a perfect maze is not suitable for evaluation as it only allows limited movement decisions. For this reason, we consider sparser mazes formed by randomly tearing down walls in a perfect maze. We introduce a wall density that represents the fraction of walls present in the maze and the total number of possible walls (if each maze node is completely enclosed by walls). Examples of $50 \times 50$ mazes with different wall densities are shown in Figure~\ref{fig:maze_50x50_wall_density}.

	For each maze shape, a given number of layouts and initial configurations of the agents are randomly generated. The initial configuration consists of the generated initial positions of the agents, which ensure that there are no overlaps (at most one agent can be located at a node).
	
	Three algorithms for maze exploration are investigated in numerical experiments: Kivelevitch and Cohen, Alian and the proposed HEDAC technique. Moreover, we also investigate different properties of the HEDAC maze algorithm, specifically the motion constraint for collision avoidance and scalability.
	
	For all maze exploration simulations, we use the same settings for all variants of the HEDAC algorithm. The cooling factor is set to $\alpha=0.3$. The iterative linear SOR solver is set to maximum relative error $\varepsilon=10^{-4}$ and $\omega=1.4$. Although we explore all variants to some extent, the SOR solver with collision avoidance mechanism and unknown maze layout is the referent configuration variant of the proposed maze exploration algorithm, that is evaluated in each analysis in the presentation of results.
	
	\subsection{Results and Discussion}
	\label{sec:results_discussion}
	\subsubsection{Maze explorations results}
	\label{sec:expl_results}
	\begin{figure*}[hb!]
		\centering
		\includegraphics[width=0.33\linewidth]{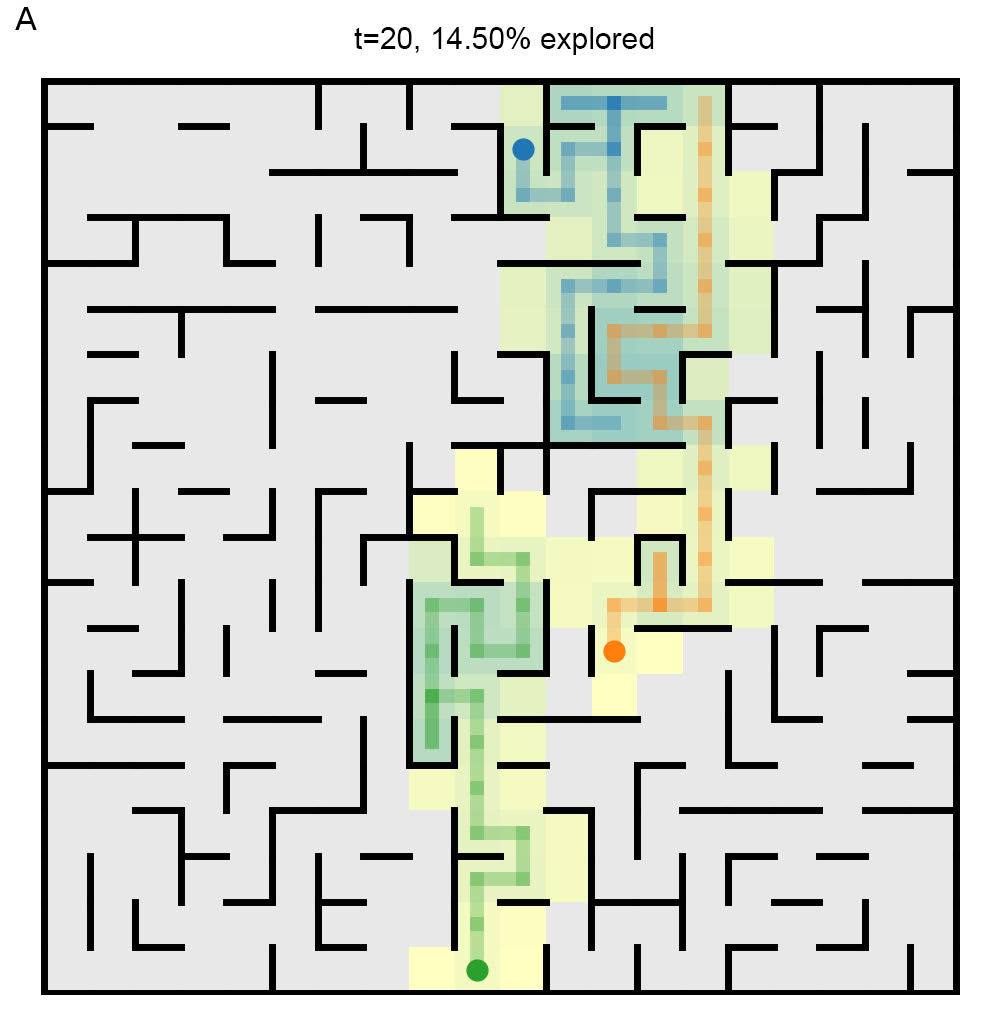}
		\includegraphics[width=0.33\linewidth]{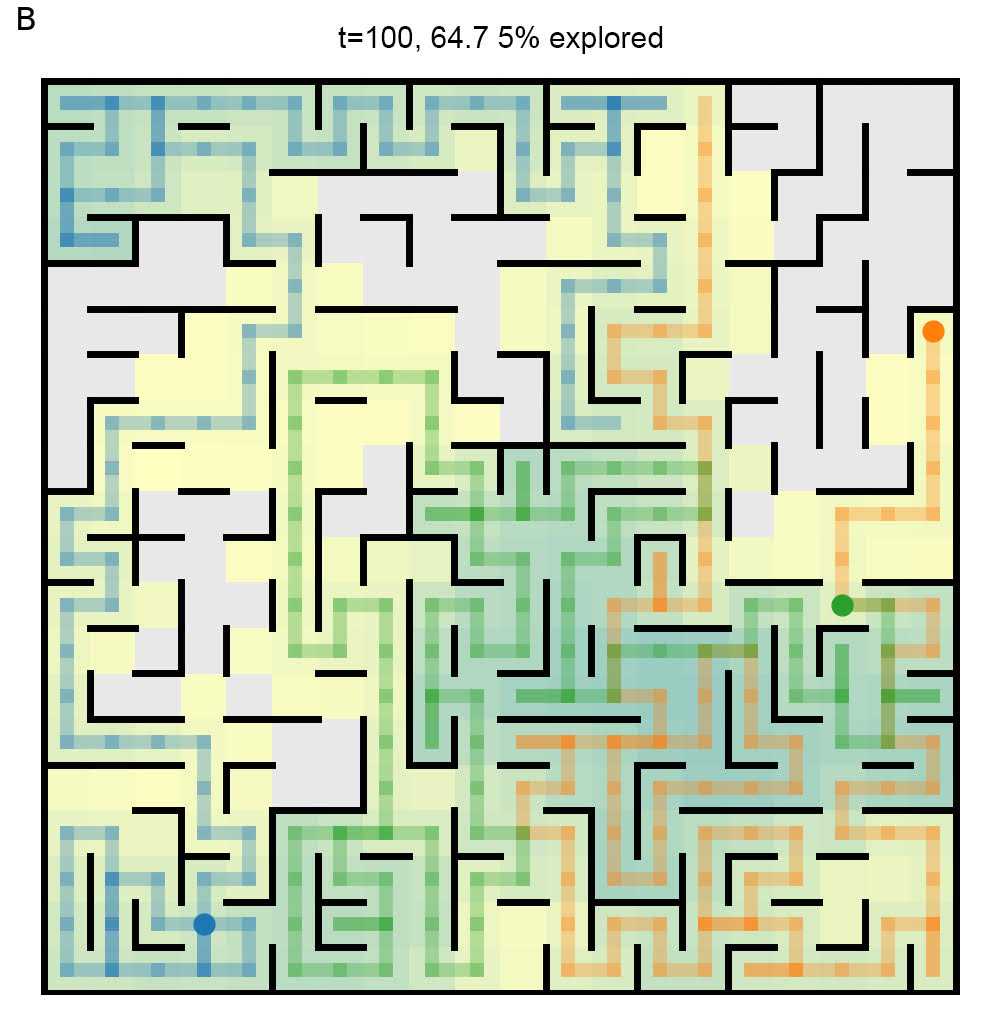}
		\includegraphics[width=0.33\linewidth]{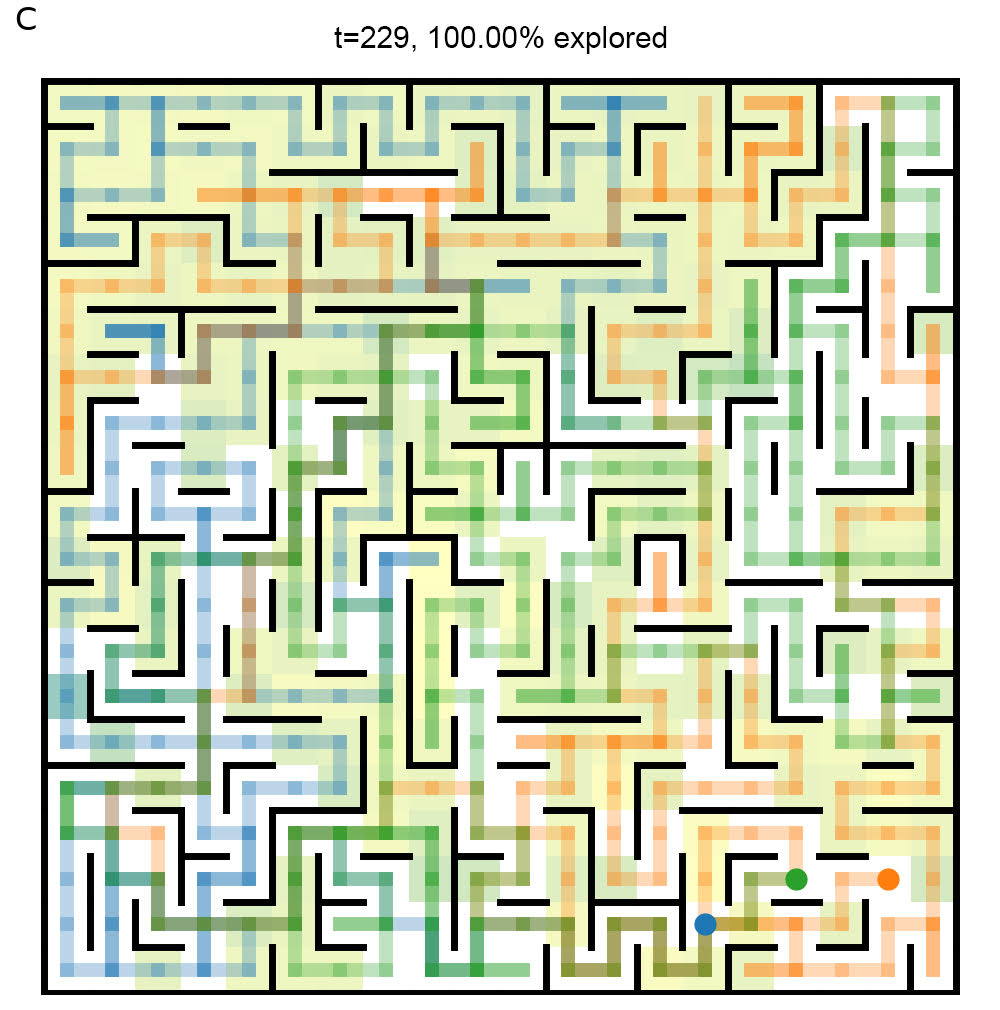}
		\caption{Example of a simulation of $20 \times 20$ maze exploration by three agents. (A), (B) and (C) show agent trajectories and potential in known nodes after 20, 100 and 229 steps, respectively. Grey colored nodes are not known by the algorithm at given step. }
		\label{fig:detailed_20x20}
	\end{figure*}

	We choose a randomly generated layout for a $20\times 20$ maze to demonstrate exploration by the proposed HEDAC algorithm. The maze is explored by three randomly positioned agents, and all maze nodes are visited after 229 steps (Figure~\ref{fig:detailed_20x20}). The progress of the exploration can be observed in the trajectories and visible nodes shown in the subfigures (A-C). As more parts of the maze are explored, the more frequently and inevitably existing paths are traversed. This leads to a stagnation of the coverage, as can be seen in Figure~\ref{fig:detailed_20x20_convergence}.
	
	The computing power is shown in the first diagram in Figure~\ref{fig:detailed_20x20_convergence}, where the calculation of the entire step takes about 1 ms CPU time. Only 0.280 s are required to fully explore the maze.
	The demonstrated computational efficiency is mainly achieved by using an iterative BR SOR linear system solver, which in most cases can provide results within the prescribed accuracy of $\varepsilon=10^{-4}$ in a single iteration. Occasionally, a few more iterations are required if the potential field changes significantly. This happens when the agent explores the last node of a closed maze corridor, resulting in the potential in the entire corridor being close to zero. Although in general SOR methods can converge slowly in this application, the SOR iterations have a good starting point and the relative accuracy of the solution is important only near the agent. 
	
	\begin{figure}[hb!]
		\includegraphics[width=\linewidth]{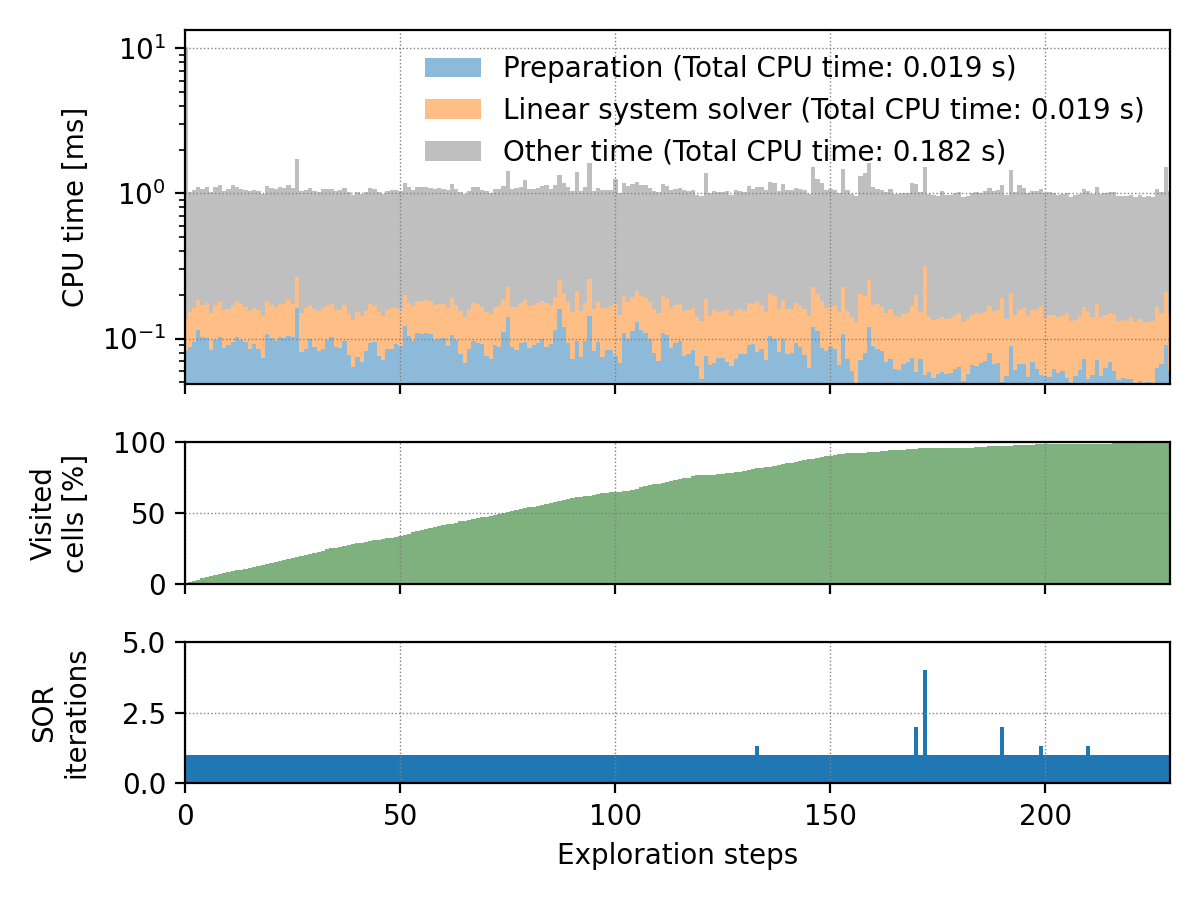}
		\caption{Example of a simulation of $20 \times 20$ maze exploration by three agents: Computational times, coverage, and number of iterations for SOR solver are displayed in for each step of the exploration.}
		\label{fig:detailed_20x20_convergence}
	\end{figure}
	Additionally, we conducted 500 exploration tests on two 
	$10\times10$ mazes: one being a perfect maze and the other a more passable maze with a $30\% $ wall density. We prepared a dataset with 500 random initial positions for five agents and tracked the percentage of visited nodes during exploration.

	\begin{figure}[h!]
		\centering
		\begin{subfigure}[b]{0.5\textwidth}
			\includegraphics[width=0.9\textwidth]{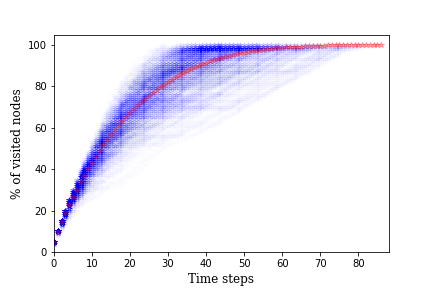}
			\caption{10x10  perfect maze}
			\label{fig:perc45} \end{subfigure}	
		\begin{subfigure}[b]{0.5\textwidth}
			\includegraphics[width=0.9\textwidth]{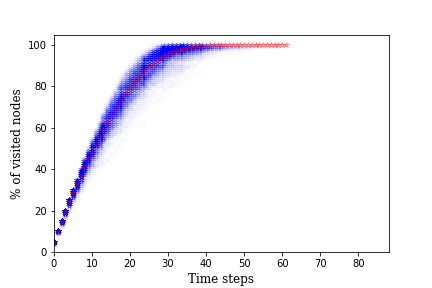}
			\caption{10x10  maze with density 30$\%$ }
			\label{fig:perc30} \end{subfigure}	
		\caption{Percent of visited nodes of a 10x10 mazes with 45$\%$ and 30$\%$ density during the exploration of the maze by 5 agents when  AC is turned off and  $\alpha=0.05$.} 
		\label{fig:perc}
	\end{figure} 
	Figure \ref{fig:perc} presents the results of those 500 conducted tests, illustrating the differences in exploration efficiency between the two maze types.
	It shows that in the first half of the simulation for both mazes the average percent of visited nodes increases almost linearly and the progress slows down in the last half. The results for fewer obstacles are significantly better on average and the variance is obviously lower. This test shows that the exploration of a prefect maze is more sensitive to the initial positions of the agents compared to sparse mazes. This test confirms the intuition that a higher density of obstacles in a maze increases the average number of steps required to explore the maze and increases the sensitivity of the search to the initial positions of the agents.

	\subsubsection{Validation of iterative BR  SOR solver}
	\label{sec:iterative}
	The linear solver is used to compute the potential field before each agent takes a next step, and this is the bottleneck of the presented algorithm. Therefore, the choice of the linear solver is crucial for the overall complexity of the algorithm. We compare a classical direct Gaussian solver calling a LAPACK routine $dgesv$ with the BR SOR method, described by equations \eqref{eq:BackiterationSOR} and \eqref{eq:RediterationSOR}, which is implemented using standard NumPy arrays. Although the BR SOR method is not precompiled, it can exploit the parallelization of  NumPy  array operations. For a fair comparison, we have chosen a relatively large 50x50 maze with 50 agents with fixed initial positions and $\alpha=1.4$. The number of steps with two solvers is not equal, as rounding errors lead to different decisions at maze intersections. Iterative SOR solver produces larger errors because of the high error tolerance but lowering the error tolerance  only slows down the solver but does not improve the search.
	
	The total computational cost of the direct solver is significantly higher than that of the iterative solver. Figure \ref{fig:computationalCost} shows the CPU time at each time step for the maze mapping using different linear solvers. The iterative solver is obviously faster at each time step. Moreover, the total time to prepare and solve the linear system with the direct solver is $541.48 s$, which is much higher than the $1.49 s$ of the iterative BR SOR solver. From an additional comparison (Figure ~\ref{fig:iterative_vs_direct}) of the total computational cost and the average number of time steps obtained for a data set described in Table ~\ref{tab:experimental_environment}, it can be seen that the iterative BR SOR solver has a clear advantage in terms of the required computational cost. We should emphasize that the expected complexity of the direct solver is $\mathcal{O}(n^2)$, where $n$ is the number of visible nodes which means we can expect computational cost to increse as more nodes are discovered. This expectation is confirmed by the slope of the linear regression line in Figure \ref{fig:convergenceD}. The complexity of the BR SOR solver is theoretically more difficult to determine, but should at best be $\mathcal{O}(n*iterations)$. The slope in Figure \ref{fig:convergenceSOR} shows that the complexity of the linear solver increases very slowly because number of iterations converges on average in one iteration step.
	  
	The iterative solver uses only vector operations, exploits parallelization and has a lower memory memory load then direct solver since the linear system is not explicitly constructed. Due to these advantages, only the iterative solver is used in all simulations of maze exploration in this section. It should be noted that, other iterative methods where tested but the implemented BR SOR emerged as a better option, as it avoids overheads associated with rearranging the system when the domain changes. Moreover, for a fixed sized linear system BR SOR outperformed iterative biconjugate gradient method (BICG) and generalized minimal residual method (GMRES), as well as direct DGESV method, but the results are omitted.
	\begin{figure}[h!]
		\centering
		\begin{subfigure}[b]{0.5\textwidth}
			\includegraphics[width=0.9\textwidth]{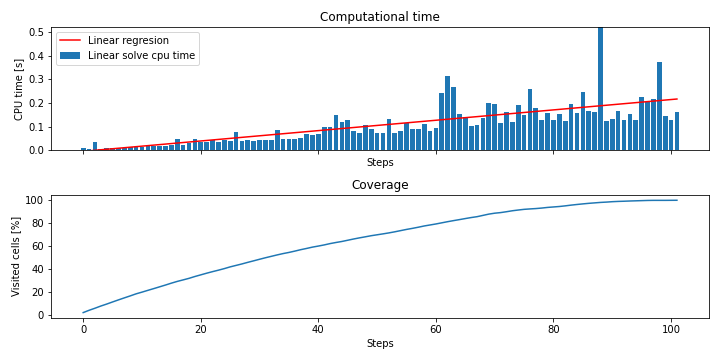}
			\caption{Direct solver computational cost}
			\label{fig:convergenceD} \end{subfigure}	
		\begin{subfigure}[b]{0.5\textwidth}
			\includegraphics[width=0.9\textwidth]{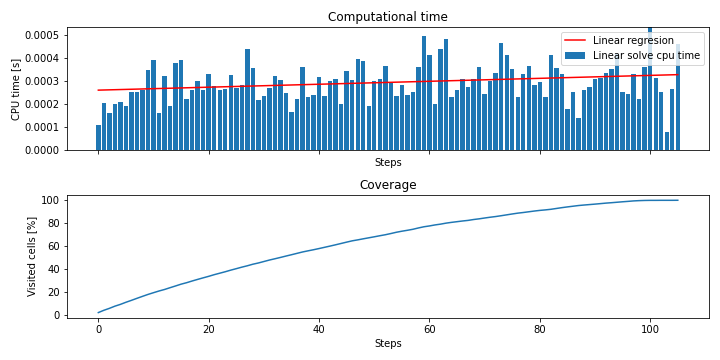}
			\caption{BR SOR iterative solver computational cost}
			\label{fig:convergenceSOR} \end{subfigure}	
		\caption{Direct and iterative solver computational cost for $50x50$ size maze with $30\%$ obstacles and 50 agents.} 
		\label{fig:computationalCost}
	\end{figure} 
	
	\begin{figure*}[h!]
		\centering
		\includegraphics[width=\linewidth]{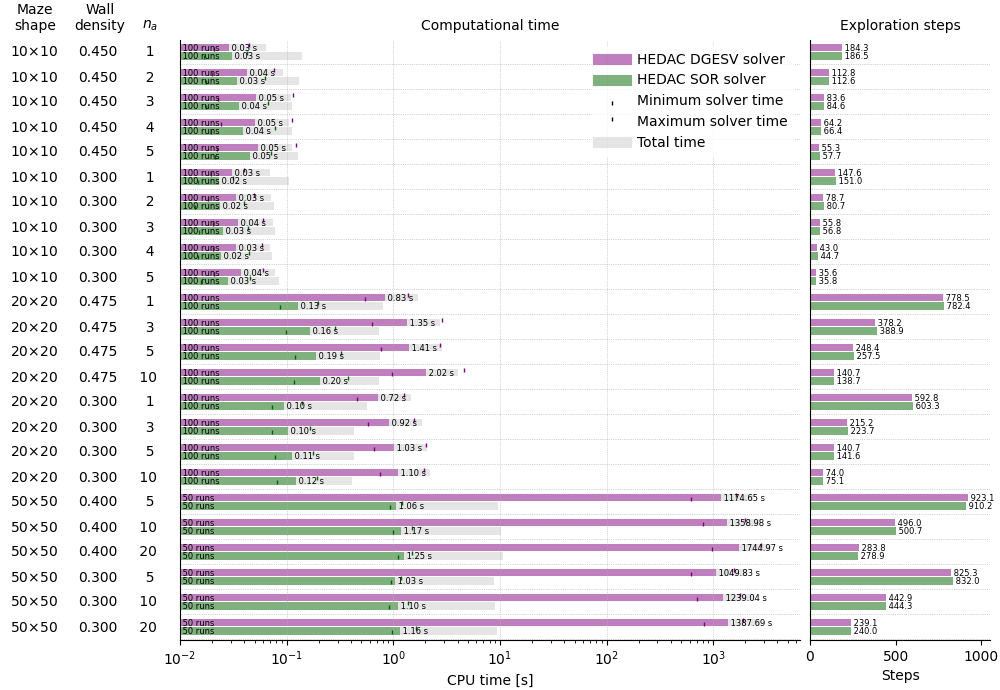}
		\caption{Comparison of iterative BR SOR and direct solver DGESV linear system solvers.}
		\label{fig:iterative_vs_direct}
	\end{figure*}

	\subsubsection{Collision avoidance}
	\label{sec:colision}
	
	\begin{figure}[h!]
		\centering
		\includegraphics[width=\linewidth]{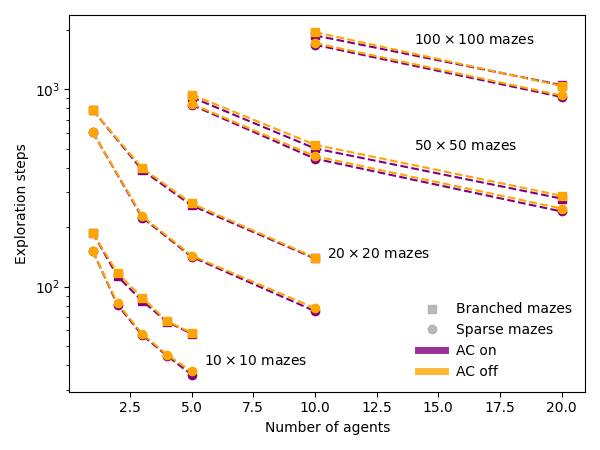}
		\caption{Influence of collision avoidance mechanism the performance of the HEDAC maze exploration algorithm.}
		\label{fig:compare_collision_avoidance}
	\end{figure}
	In the presented algorithm, a linear system is solved and the potential field is updated before each agent chooses the next step. For this reason, it is useful to check the influence of the collision avoidance mechanism.
	We tested the behavior of the algorithm with AC on and off in perfect mazes and general mazes with 30$\%$ wall density. For the tests, we used the dataset described in section \ref{sec:exp_env} and the results of the tests performed can be seen in Figure \ref{fig:comparison_alternative}.
	The Wilcoxon test with a significance level of $p=0.05$ shows that there is no significant difference between the number of steps required to explore the entire maze with AC off and on.
	Another plot of the performance of the HEDAC maze exploration algorithm as a function of the collision avoidance mechanism can be seen in Figure \ref{fig:compare_collision_avoidance}. It is noticeable that the performance of the algorithm is almost the same regardless of whether AC is off or on.
	These test results show that the results of the HEDAC algorithm do not require AC on to ensure an efficient workload distribution. If the potential field was updated only once per time step and the agents chose their next step independently, the AC would be necessary to prevent the agents' paths from merging and traveling together from then on.
	
	\subsubsection{Comparison with alternative algorithms}
	\label{sec:comparison_alternative}
	As can be seen from the Table ~\ref{tab:literature_overview_maze} in Section ~\ref{sec:introduction}, there are not many algorithms that deal with searching in an unknown maze with the goal of finding a target with an unknown location. We have performed two tests in which we compare the HEDAC algorithm with two alternative algorithms presented in \cite{kivelevitch2010multi} and \cite{alian2022multi}. We compare the efficiency of the algorithms in terms of the average time steps required for exploring the entire maze, i.e. mapping the entire maze, and the average number of steps required for the first visit to a given node in each scenario. We chose to use the Wilcoxon test to compare both the average number of steps required to explore the maze and the average number of steps required to first visit a particular node in each scenario. This decision was based on the non-parametric nature of the Wilcoxon test, which does not require the assumption of a normal distribution of the data. Given the potential skewness and presence of outliers in the step count data, as well as our sample size, the Wilcoxon test was better suited to accurately assess the central tendencies of our datasets than other statistical tests that assume normality. 
	
	\begin{figure}[hb!]
		\includegraphics[width=\linewidth]{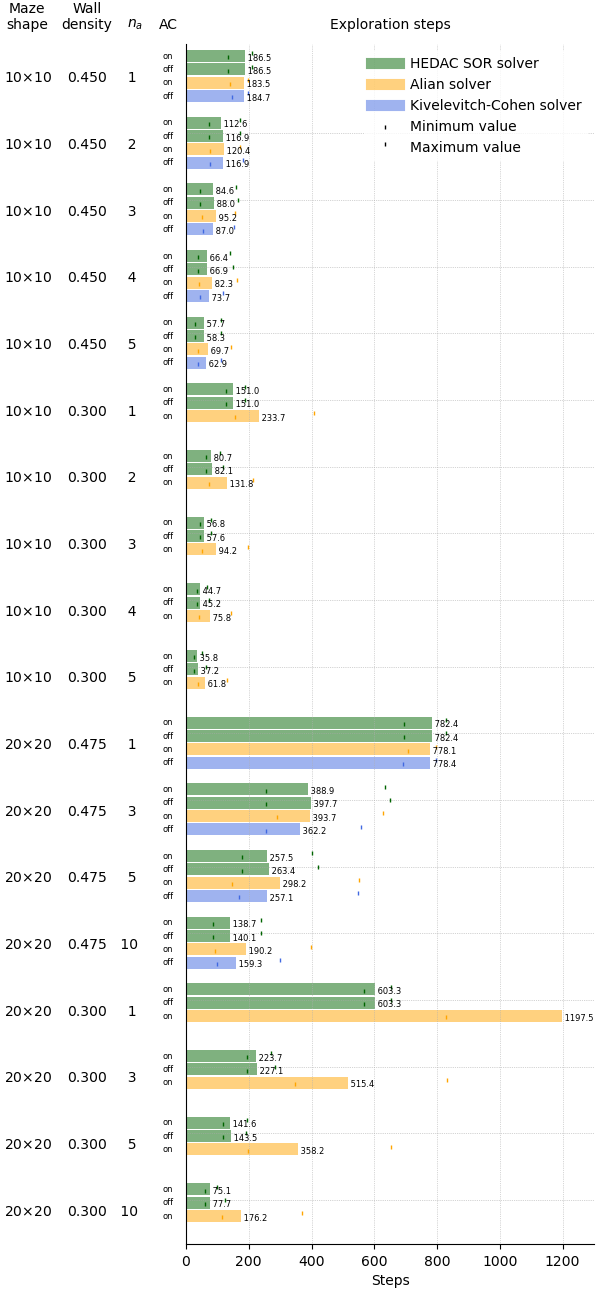}
		\caption{Comparison of the algorithms in terms of the average number of steps required to explore the entire maze.}
		\label{fig:comparison_alternative}
	\end{figure}
	\medskip
	\textit{Comparison with Kivelevitch and Cohen algorithm }\\
	Kivelevitch and Cohen's algorithm is used to solve a problem in which a group of agents searches for an exit of the maze (first phase) and, when it has found it, leads the entire group out of the maze through this exit (second phase). In the first phase, i.e. when searching for the exit, each agent follows a generalization of the Tarry algorithm, except that all agents know which node has already been visited. Each agent follows the following rules:
	\begin{itemize}
		\item The agent should move to nodes that have not yet been traveled to by another agent.
		\item If there are several such nodes, the agent should choose one at random.
		\item If there is no unvisited node (at least one agent has visited all candidate nodes), the agent should prefer a node that has not yet been visited by it.
		\item If the agent has already moved in all possible directions or has reached a dead end, it should backtrack to a node that fulfills one of the previous conditions
		\item All steps should be logged by the agent as it moves.
		\item When retreating, the nodes from which the agent has retreated should be marked as "dead end"
	\end{itemize}

	In order to achieve the completion of the second phase, i.e. to achieve that all agents leave the maze, the Kivelevitch-Cohen algorithm had to ensure that the agents have at least one common point in their paths, which was easiest to achieve by having all agents start from the same position.
	Since our goal is to find the target, we used only the first phase of Kivelevitch and Cohen's algorithm to compare with our results, so that it is possible to place the agents in different starting positions.
	To compare the HEDAC maze exploration algorithm  with the algorithm implemented by Kivelevitch and Cohen, we performed tests with 10x10 and 20x20 perfect mazes as described in Table ~\ref{tab:experimental_environment}. We used the HEDAC algorithm to explore mazes with the AC off, since Kivelevitch and Cohen's algorithm works under the same conditions. We tested Kivelevitch and Cohen's algorithm ( using the same data set). As can be seen from Figure ~\ref{fig:comparison_alternative}, the HEDAC algorithm for exploring the maze has no particular advantage over the Kivelevitch and Cohen's algorithm in terms of the average time steps required to map each node of the maze. According to the Wilcoxon test with a $p$ value of $0.05$, the differences in the average number of steps required by both algorithms to explore the maze were not significant, except in the case of exploring a 10x10 maze with 5 agents and a 20x20 maze with 10 agents, where the average number of steps to explore the maze was lower for the HEDAC maze exploration algorithm. From the results shown in Table ~\ref{tab:cohen}, it can be seen for how many nodes there is a  significant difference in the number of steps required for the first visit, depending on the algorithm used, with a $p$- value of $0.05$ and it is noted how many of these nodes were visited by the HEDAC maze exploration algorithm in fewer steps on average. Based on this data, it can be concluded that HEDAC performs better compared to Kivelevitch and Cohen's algorithm when the number of agents is increased. Better performance of the HEDAC algorithm could possibly be achieved by adjusting the $\alpha$ parameter, but since we used the settings described in Section ~\ref{sec:exp_env} for all tests, we will not analyze this aspect further here. However, considering that Kivelevitch and Cohen's algorithm was developed for searching in perfect mazes and cannot be applied to other types, we believe that the roughly equivalent results and adaptability of the HEDAC algorithm for exploring mazes to different maze types provide a sufficient advantage over this algorithm.
	\begin{table}[h!]
		\centering\footnotesize
		\caption{Comparation of HEDAC and Kivelevitch-Cohen algorithm on perfect mazes.The total number of nodes with a significantly different average number of time steps for their first visit, and the number of nodes that are, on average, visited earlier by the HEDAC algorithm.  $p=0.05$ }
		\label{tab:cohen}
		\begin{tabular}{ccccc}
			\hline
			Maze size&\multicolumn{2}{c}{$10 \times 10$} &\multicolumn{2}{c}{$20 \times 20$} \\
			Wall density	 & \multicolumn{2}{c}{$30\%$}&  \multicolumn{2}{c}{$30\%$}\\
			Number of agents&	Total&HEDAC & Total&HEDAC\\
			\hline 
			1&33&15&63&32 \\
			2&28&8&-& -\\
			3&34&19&61&29 \\
			4& 36&28 &-&-\\
			5& 43&33&131&61 \\
			10&- & -& 117&96\\ \hline
		\end{tabular}
	\end{table}
	\\
	
	\textit{Comparison with Alian's algorithm}\\
	The Alian algorithm can be seen as an improvement of the Kivelevitch and Cohen algorithms. It works for all types of mazes and solves the problem of finding the target in the maze and then getting the entire group out of the maze through this exit. In this algorithm, the nodes of the maze are colored in three colors: white are unvisited maze nodes, gray are visited maze nodes, and black are maze nodes from which there is only one path to the exit.
	When searching for the exit, assuming that an agent detects a neighboring maze node with its sensor, each agent follows the following rules:
	\begin{itemize}
		\item For each currently visited maze node, the occupied flag is set to true so that other agents know that they cannot go there.
		\item If there is a neighboring maze node that has not yet been visited by an agent (white), the agent should go there. If there are several such nodes, the agent should choose one at random.
		\item If there is no node that has not yet been visited by an agent, the agent should prefer a gray node that has not yet been visited by it.
		\item If there are several such nodes, the agent should choose one at random.
		\item If the agent has already traveled in all possible directions, the agent will choose the gray  node it has visited the least time by it.
		\item If there is only one neighboring maze node that can be visited, but it is currently occupied, the agent should wait at its current maze node until it becomes free
		\item If there is only one way to leave the current node and all other directions are obstacles or black nodes, the agent marks the current node as a black node so that other agents cannot move there.
	\end{itemize}
	A detailed algorithm can be found in \cite{alian2022multi}.
	To compare the HEDAC algorithm for maze exploration with the implemention of Alian algorithm, we performed tests with 10x10 and 20x20 perfect and more passable mazes as described in Table ~\ref{tab:experimental_environment}. 
	We use HEDAC with AC on, since the Alian's algorithm does not impose a collision constraint. From the comparison results shown in Figure ~\ref{fig:comparison_alternative}, it is clear that the HEDAC algorithm requires significantly fewer steps to explore sparse mazes with wall densities of $30\%$. When analyzing in more detail the average number of time steps required to explore mazes and performing the Wilcoxon test with a $p$ value of $0.05$, we found that there is no significant difference in results for 10x10 perfect mazes explored by 1, 2, and 3 agents, nor in results for 20x20 perfect mazes when exploration is performed by  5 and 10 agents.  We also  performed the Wilcoxon test with a $p$ value of $0.05$ to compare the algorithms based on the average number of time steps required to visit each maze node for the first time. In mazes with a 30$\%$ density, the total number of nodes where significant differences in results are observed is about half of the total number of nodes, and for these nodes, in terms of the average number of steps required to visit for the first time  that node, the HEDAC algorithm has a significant advantage. In perfect mazes, a smaller proportion of nodes show a significant difference in results.In the case of 10x10 perfect mazes with 2 and 3 agents, and 20x20 perfect mazes with 1  and 2 agents Alian's algorithm performed better. What is evident is that we have a significant advantage in exploring sparser mazes compared to  implementation of the Alian's algorithm. Additionally, it has been observed that for the Alian algorithm, random bias can significantly influence the total number of steps required to complete the same task.
	
	\begin{table}[h!]
		\centering\footnotesize
		\caption{Comparation of HEDAC and Alian's algorithm on sparse mazes. The total number of nodes with a significantly different average number of time steps for their first visit, and the number of nodes that are, on average, visited earlier by the HEDAC algorithm.  $p=0.05$ }
		\label{tab:alian1}
		\begin{tabular}{ccccc}
			\hline
			Maze size&\multicolumn{2}{c}{$10 \times 10$} &\multicolumn{2}{c}{$20 \times 20$} \\
			Wall density	 & \multicolumn{2}{c}{$45\%$}&  \multicolumn{2}{c}{$47.5\%$}\\
			Number of agents&	Total&HEDAC & Total&HEDAC\\
			\hline 
			1&29&21&117&52 \\
			2&27&12&-& -\\
			3&6&6&256&54 \\
			4& 36&33 &-&-\\
			5& 25&20&48&46 \\
			10& -&- & 79&78\\ \hline
		\end{tabular}
	\end{table}
	
	\begin{table}[h!]
		\centering\footnotesize
		\caption{Comparation of HEDAC and Alian's algorithm on perfect mazes. The total number of nodes with a significantly different average number of time steps for their first visit, and the number of nodes that are, on average, visited earlier by the HEDAC algorithm.  $p=0.05$ }
		\label{tab:alian2}
		\begin{tabular}{ccccc}
			\hline
			Maze size&\multicolumn{2}{c}{$10 \times 10$} &\multicolumn{2}{c}{$20 \times 20$} \\
			Wall density	 & \multicolumn{2}{c}{$30\%$}&  \multicolumn{2}{c}{$30\%$}\\
			Number of agents&	Total&HEDAC & Total&HEDAC\\
			\hline 
			1&41&41&239&239 \\
			2&40&39&-& -\\
			3&56&54&239&239 \\
			4& 48&46 &-&-\\
			5& 79&78&268&268 \\
			10& -& -& 215&214\\ \hline
		\end{tabular}
	\end{table}

	\subsubsection{Scalability}
	\label{sec:scalability}
	\begin{figure}[h!]
		\centering
		\includegraphics[width=\linewidth]{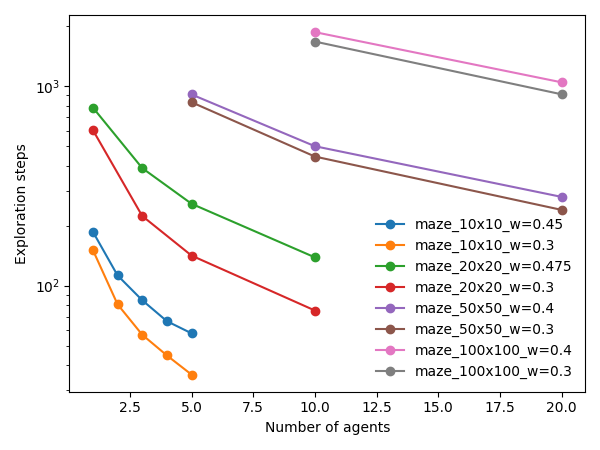}
		\caption{Analysis of the performance of the HEDAC maze exploration algorithm with various number of agents.}
		\label{fig:compare_number_of_agents}
	\end{figure}
	
	We will discuss scalability in terms of dividing the work between multiple agents, more specifically we will discuss scalability in terms of the percentage of average steps required to explore the maze with a multi-agent system compared to the average steps required to explore the maze with a single agent. In Section ~\ref{sec:colision} it is shown that there is no significant difference between the performance of the HEDAC algorithm depending on the collision avoidance mechanism, so it is sufficient to analyze the scalability of the HEDAC algorithm with only one of these two options, i.e. AC on or AC off.
	Table ~\ref{tab:scale30unkoffSE} shows the scalability analysis  of the HEDAC maze exploration algorithm in the sparser mazes in terms of speedup coefficient $S$ and efficiency $E$. The speedup coefficient $S$ stands for the ratio of the average number of time steps an agent needs to explore the entire maze to the average number of time steps $n$ agents need to complete the same task, and the efficiency coefficient $E$ stands for the ratio of speedup to the number of agents. The speedup coefficient $E$ indicates that although performance increases with additional agents, the improvement is not linear to the number of agents. Consequently, each additional agent reduces efficiency.
	The efficiency coefficient $E$, which is relatively high and stable for the results shown in the Table ~\ref{tab:scale30unkoffSE} for up to 5 agents, indicates that the HEDAC algorithm distributes the tasks quite well among small number of agents. However, the decrease in efficiency when exploring the 20x20 maze with 10 agents indicates that the optimal number of agents for 20x20 mazes is up to 10.
		\begin{table}[h!]	
			\centering\footnotesize
			\caption{Scalability of HEDAC algorithm  with AC switched off in the unknown maze with 30$\%$ obstacles share.  Average total number of time steps, speedup coefficient $S$ , and efficiency coefficient $E$ for different numbers of agents used to explore a maze. }
			\label{tab:scale30unkoffSE}
			\begin{tabular}{ccccccc}
				\hline
				Maze size & \multicolumn{3}{c}{$10 \times 10$} &\multicolumn{3}{c}{$20 \times 20$} \\
				\hline
				Number of &Total& &&Total&&\\
				agents & steps&$S$& $E$ & steps&$S$ & $E$ \\
				\hline
				1& 151.0 & 1 &1& 600.3  &  1 &1 \\
				2&80.7 & 1.87 &0.94&- &-&- \\
				3&56.8 &2.65&0.88&223.7 & 2.68&0.89\\
				4&44.7 & 3.37&0.84 &- &  -&-\\
				5& 35.8& 4.21 & 0.84&141.6& 4.23& 0.85 \\
				10& -& - &- &75.1& 7.5& 0.75 \\ \hline
			\end{tabular}
		\end{table}
		Figure ~\ref{fig:compare_number_of_agents} shows an analysis of the average number of steps required to explore different types of mazes with different numbers of agents. This analysis shows that as the number of agents increases, the average number of steps required to explore the maze decreases. However, this reduction deviates further from linearity as the maze structure becomes larger and more complex.
		From Figures ~\ref{fig:comparison_alternative} and ~\ref{fig:compare_number_of_agents}, it can be seen that the HEDAC algorithm also works well for perfect mazes and that the division of labour between agents is good. However, the perfect mazes show a distorted picture of the scalability analysis, which is to be expected since the agents in perfect mazes do not have as many opportunities to choose better paths and overtake each other.

		\section{Conclusion}
		\label{sec_conclusion}

		The HEDAC maze exploration algorithm proved to be robust, adaptable, scalable and computationally inexpensive. The algorithm does not need to be adapted for each test, except for the parameter $\alpha$, which emphasizes the global or local search of the multi-agent system. \\ The HEDAC algorithm with the iterative BR SOR linear solver provides a simple and cost-effective solution to the  maze exploration problem that can be used in a dynamically changing multi-agent environment and is capable of adapting to various constraints and applications.    Moreover, the algorithm does not cause deadlocks and guarantees the exploration of the entire maze if all nodes can be reached.
	The anti-collision procedure can still be improved, but we have shown that this will certainly not have a major impact on the overall average result. In future work, we will generalize the algorithm to graph search and monitoring, which is an obvious generalization that allows for many different applications. An additional step to improve this approach could be the theoretical calculation of the optimal relaxation parameter for the SOR method applied to this particular application. However, although the computational load of the BR SOR method is very low, an alternative iterative method should be considered for larger mazes, as the convergence of the BR SOR method slows down with the increasing size of the linear system. The approach taken in this paper can be used to improve similar previous HEDAC implementations in continuous domains as well as  on graphs, and also as motivation for exploring new ways to accelerate the HEDAC method.

		\section*{CRediT authorship contribution statement}
		\textbf{Bojan Crnković}:  Methodology, Conceptualization, Supervision, Writing – original draft, Writing – review
		\& editing, Validation, Software, Formal analysis, Investigation.
		\textbf{Stefan Ivić}: Methodology, Software, Writing – review
		\& editing, Investigation, Supervision, Conceptualization, Visualization.
		\textbf{Mila Zovko}: Writing – original draft, Writing – review
		\& editing, Software, Validation, Methodology, Formal analysis, Investigation.
				
		\section*{Acknowledgements}		
		This publication is supported by the Croatian Science Foundation under the projects UIP-2020-02-5090 (for S.I.) and IP-2019-04-1239 (for B.C. and M.Z.).

		\section*{Data and code availability}
		
		The entire process of mapping the 40x40 maze by 15 agents and the increase in the value of the cumulative function can be viewed on \href{https://youtu.be/SWI9Q_tmPC4?si=5ZXN-jvbB8vVzM-w}{mapping video} and \href{https://youtu.be/9YSC37aZwJs?si=dwYEzYWy9kZcfFNG}{cumulative function video}.
		Process of mapping 20x20 maze by 3 agents where one can track agents trajectories and potential can be viewed on \href{https://youtu.be/jGd8YDygh7M}{potential tracking video}.
		
		Python code along with maze layouts and agent configurations needed to reproduce this research is publicly available at \href{https://gitlab.com/sivic/hedac-maze-exploration}{https://gitlab.com/sivic/hedac-maze-exploration}.

		
		
		\bibliographystyle{elsarticle-harv} 
		\bibliography{references.bib}
		

	\end{document}